\newtheorem{assumption}{Assumption}
\newtheorem{theorem}{Theorem}
\newtheorem{lemma}{Lemma}
\pgfplotsset{compat=1.18}
\title{FedAdamW: A Communication-Efficient Optimizer with Convergence and Generalization Guarantees for Federated Large Models}
\author{
  Junkang Liu\textsuperscript{\rm 1},
  Fanhua Shang\textsuperscript{\rm 1} \thanks{Corresponding authors.},
  Hongying Liu\textsuperscript{\rm 2}$^*$,
  Yuxuan Tian\textsuperscript{\rm 1},
  Yuanyuan Liu\textsuperscript{\rm 3}$^*$,
  Jin Liu\textsuperscript{\rm 3},
  Kewen Zhu\textsuperscript{\rm 1},
  Zhouchen Lin\textsuperscript{\rm 4,5}
}
\begin{document}

\maketitle

\begin{abstract}

	AdamW has become one of the most effective optimizers for training large-scale models. We have also observed its effectiveness in the context of federated learning (FL). However, directly applying AdamW in federated learning settings poses significant challenges: (1) due to data heterogeneity, AdamW often yields  high variance in the second-moment estimate $\boldsymbol{v}$; (2) the local overfitting of AdamW may  cause client drift; and (3) reinitializing  moment estimates ($\boldsymbol{v}$, $\boldsymbol{m}$) at each round slows down convergence. To address these challenges, we propose the first \underline{Fed}erated \underline{AdamW}  algorithm, called \texttt{FedAdamW}, for training and fine-tuning various large models. \texttt{FedAdamW} aligns local updates with the global update using both a \textbf{local correction mechanism} and decoupled weight decay to mitigate local overfitting. \texttt{FedAdamW} efficiently aggregates the \texttt{mean} of the second-moment estimates to reduce their variance and reinitialize them. Theoretically, we prove that \texttt{FedAdamW} achieves a linear speedup convergence rate of $\mathcal{O}(\sqrt{(L \Delta \sigma_l^2)/(S K R \epsilon^2)}+(L \Delta)/R)$ without \textbf{heterogeneity assumption}, where $S$ is the number of participating clients per round, $K$ is the number of local iterations, and $R$ is the total number of communication rounds. We also employ PAC-Bayesian generalization analysis to explain the effectiveness of decoupled weight decay in local training. Empirically, we validate the effectiveness of \texttt{FedAdamW} on language and vision Transformer models. Compared to several  baselines, \texttt{FedAdamW} significantly reduces communication rounds and improves test accuracy.

\end{abstract}

\begin{links}
\link{Code}{https://github.com/junkangLiu0/FedAdamW}
\link{Extended version}{https://arxiv.org/pdf/2510.27486}
 \end{links}

\section{Introduction}

With the rapid growth of data and rising concerns over user privacy, traditional centralized training paradigms have become inadequate. \textbf{Federated Learning (FL)}~\cite{mcmahan2017communication} offers a scalable and privacy-preserving framework that enables collaborative model training across decentralized clients without sharing raw data~\cite{bian2025fedalt,NEURIPS2024_a11e42a3,Bian_2025_ICCV,10546478}. As data becomes increasingly siloed, FL is  a practical solution for large-scale distributed deep learning~\cite{li2025multi,li2023ultrare,an2022numerical,an2024inspired,an2024robust,liu2023single,liu2025fedadamw,liu2025dp,liu2025fedmuon}.

However, recent trends in model design-particularly the rise of large-scale architectures such as GPT~\cite{radford2018improving}, RoBERTa~\cite{liu2019roberta}, and Vision Transformers (ViT)~\cite{dosovitskiy2020image}—pose new challenges for existing FL algorithms. Specifically, the widely-used \textbf{FedAvg} algorithm, which relies on stochastic gradient descent (\textbf{SGD})~\cite{bottou2010large} in local, struggles to efficiently train Transformer models. 
This is due to the slow convergence and poor adaptivity of SGD in Transformer models~\cite{zhang2024transformers, liuimproving}, which have more complex architectures compared to CNNs. For example, components such as query, key, and value often require different learning rates to be trained effectively~\cite{zhang2024transformers}.
In contrast, \textbf{AdamW}~\cite{loshchilov2017fixing}, an adaptive optimizer with decoupled weight decay, has demonstrated superior performance in centralized training of large models based on Transformer~\cite{vaswani2017attention,liu2019roberta}, offering faster convergence and improved generalization, compared to \textbf{Adam} \cite{kingma2014adam} and SGD. 

Empirically, we also observe this advantage in FL: as shown in \textbf{Figure~\ref{fig:sgd_grid_search}}, local training with AdamW (\texttt{Local AdamW}) converges significantly faster than \texttt{Local SGD} ~\cite{mcmahan2017communication} for training various Transformer models . \textit{However, naively applying AdamW in FL leads to the following new challenges:}

\begin{itemize}
    \item \textbf{Challenge 1: High variance in second-moment estimate ($\boldsymbol{v}$).} Due to non-i.i.d. data across clients, gradient noise leads to high variance in second-moment estimate.
    \item \textbf{Challenge 2: Local overfitting and client drift.} While AdamW accelerates local training, it intensifies local overfitting. Under non-i.i.d. data,
this manifests as client drift, severely hindering the global model’s performance.
    \item \textbf{Challenge 3: Moment  estimate reinitialization.} reinitializing the first- and second-moment estimates from scratch in every round hinders the convergence rate.
\end{itemize}

\begin{figure*}[tb]
  \centering
  \begin{subfigure}[b]{0.33\textwidth}
    \centering
    \includegraphics[width=\linewidth]{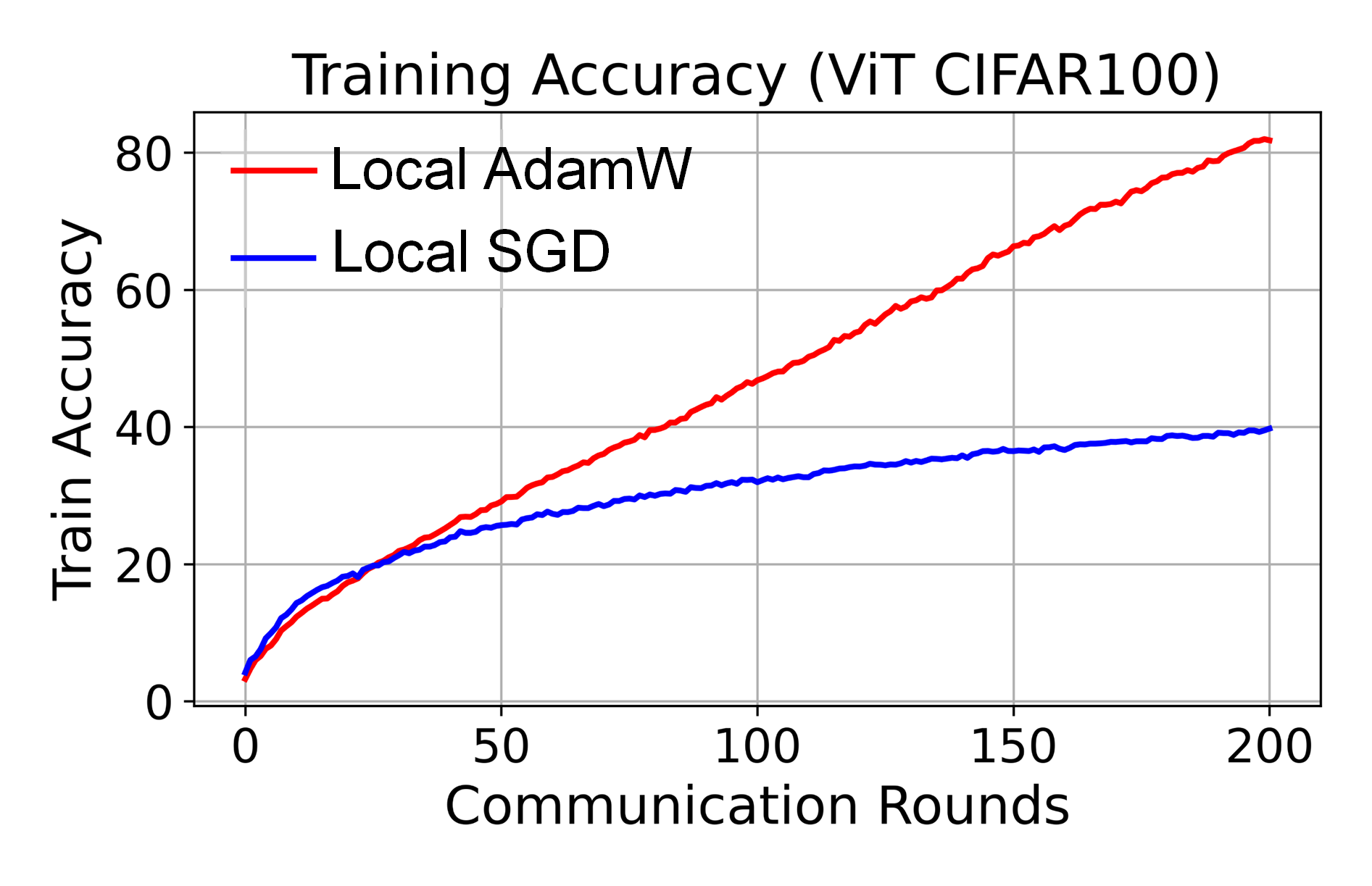}
    \caption{ViT on CIFAR100}
    \label{fig:sgd_grid_search:vit}
  \end{subfigure}
  \hfill
  \begin{subfigure}[b]{0.33\textwidth}
    \centering
    \includegraphics[width=\linewidth]{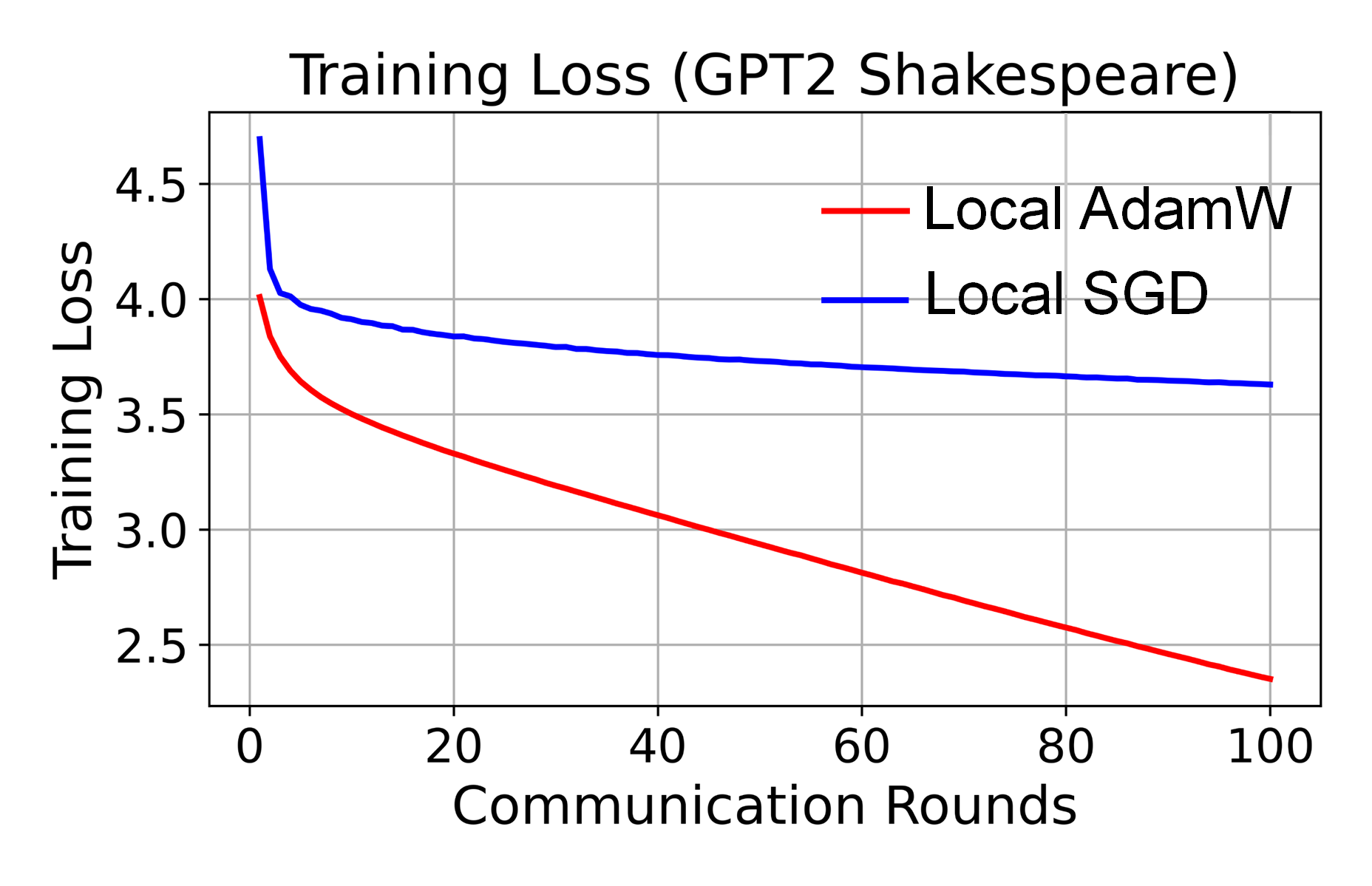}
    \caption{GPT2 on Shakespeare}
    \label{fig:sgd_grid_search:gpt2}
  \end{subfigure}
  \hfill
  \begin{subfigure}[b]{0.33\textwidth}
    \centering
    \includegraphics[width=\linewidth]{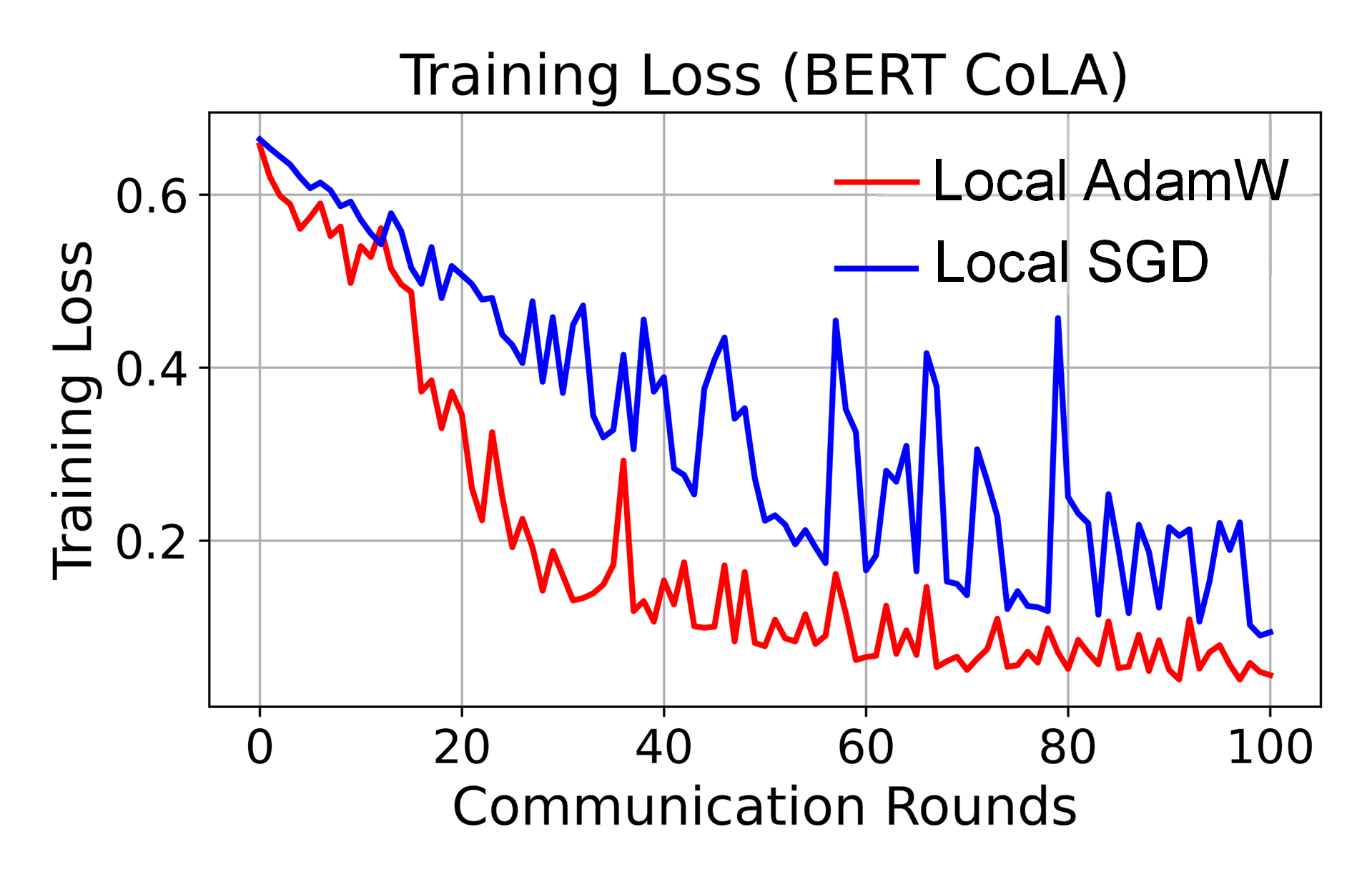}
    \caption{BERT on CoLA}
    \label{fig:sgd_grid_search:bert}
  \end{subfigure}
  \caption{ Performance of Local SGD and Local AdamW. For training  ViT-Base, GPT2, and BERT \cite{liu2019roberta}, we carefully tune the learning rate. For training all these Transformer models, Local SGD is still significantly worse than Local AdamW.}
  \label{fig:sgd_grid_search}
\end{figure*}

These challenges motivate us to develop \textbf{\underline{Fed}erated \underline{AdamW}} (\texttt{FedAdamW}), a novel optimizer tailored for federated learning. \texttt{FedAdamW} addresses the above issues through two key designs: 
(1) a \textbf{local correction mechanism} that integrates global gradient estimates into the local update, effectively aligning local and global updates to reduce client drift; and 
(2) a \textbf{moment aggregation strategy} that aggregates the \texttt{mean} of second-moment estimates is theoretically grounded in the Hessian block structure, to reduce variance of $\boldsymbol{v}$ and avoid repeated initialization.  


\noindent \textbf{Our contributions} are summarized as follows:

\begin{itemize}[leftmargin=*]
    \item \textbf{Empirical importance of AdamW and challenges in FL.} We empirically demonstrate the effectiveness of the AdamW optimizer in federated settings, particularly for training Transformer models. Our analysis reveals three key challenges when applying AdamW in FL.

    \item \textbf{We propose \texttt{FedAdamW}, a principled FL algorithm tailored for adaptive optimizers.} To address the above challenges, \texttt{FedAdamW} integrates global update estimate into local updates to mitigate overfitting and improve consistency. Inspired by the Hessian structure, we design a communication-efficient aggregation strategy that communicates the \texttt{mean} of second-moment  across clients. 

    \item \textbf{Theoretical guarantees with improved convergence and generalization.} 
    \texttt{FedAdamW} achieves a linear speedup convergence rate of 
    $\mathcal{O}(\sqrt{(L \Delta \sigma_l^2)/(S K R \epsilon^2)}+(L \Delta)/R)$.
    To the best of our knowledge, this is the first federated adaptive optimization algorithm without requiring \textbf{gradient heterogeneity assumption}.
    Furthermore, we utilize the \textbf{PAC-Bayesian theory} to provide insights into the generalization benefits of decoupled weight decay and global-local alignment.
\end{itemize}

\section{Related Work}

\textbf{$\bullet$ Heterogeneity Issues in Federated Learning.} Data heterogeneity across clients is a fundamental challenge in FL. A range of algorithms have been proposed to mitigate the adverse effects of non-i.i.d. data distributions. For example, FedProx~\cite{li2018federated} introduces a proximal term to restrict local updates; SCAFFOLD~\cite{karimireddy2020scaffold} applies control variates to correct client drift; and FedCM~\cite{xu2021fedcm} leverages client momentum to stabilize updates. 
FedNSAM \cite{liu2025consistency} analyzed the consistency between local and global flatness, 
FedBCGD ~\cite{liu2024fedbcgd} proposed a communication-efficient accelerated block coordinate gradient method. 
FedSWA ~\cite{liuimproving} further improved generalization under highly heterogeneous data via  stochastic  weight averaging.


\textbf{$\bullet$ Adaptive Optimization in Centralized Settings.}
Adaptive gradient methods have demonstrated superior empirical performance over SGD in centralized settings, particularly for deep neural networks. Pioneering works include Adagrad~\cite{duchi2011adaptive}, Adam~\cite{kingma2014adam}, AMSGrad~\cite{reddi2019convergence}, and AdamW~\cite{loshchilov2017fixing}. AdamW, in particular, decouples weight decay from gradient updates, offering improved generalization and training stability—attributes especially critical for Transformer models~\cite{liu2019roberta, zhang2024transformers,ouyang2025graph,qian2024bird,limva,yang2025distillation,DBLP:conf/aaai/Zhang0LXCCW25,DBLP:conf/aaai/ZhangLXCW024,zhang2024cf,wei2025comprehensive,zhou2023fastpillars,zhou2024information}

%

\textbf{$\bullet$ Adaptive Optimization in Federated Learning.} Recent efforts have explored integrating adaptive methods into FL. FedOpt~\cite{reddi2020adaptive} incorporates server-side adaptivity using Adam and Yogi. FAFED~\cite{wu2023faster} aggregates both the first- and second-moment estimates of Adam across clients to stabilize training. FedAMS ~\cite{chen2020toward} shows that averaging the second-moment estimate of Adam is crucial to prevent divergence. More recently, \citet{sun2023efficient} proposed to only aggregate the second-moment estimate to reduce communication overhead. However, these works only conducted experiments on CNN models. These studies are all based on \textbf{Adam}, which performs poorly with large weight decay.



\section{FL Problem Setup}
FL aims to optimize model parameters with  local clients, i.e., minimizing the following population risk:
\begin{align}
	f(\boldsymbol{\boldsymbol{x}})=\frac{1}{N} \sum_{i=1}^N\left(f_i(\boldsymbol{\boldsymbol{x}}):=\mathbb{E}_{\xi_i\sim \mathcal{D}_i}\left[F_i\left(\boldsymbol{\boldsymbol{x}} ; \xi_i\right)\right]\right).
	\label{eq 1}
\end{align}
The function $f_i$ represents the loss function on client $i$. $\mathbb{E}_{\xi_i \sim \mathcal{D}_i}[\cdot]$ denotes the conditional expectation with respect to  the sample $\xi_i$. $\xi_i$ is drawn from distribution $\mathcal{D}_i$ in client $i$.  $N$ is the number of clients.

\section{Challenges of  AdamW in FL}

 Despite the widespread use of AdamW~\cite{loshchilov2017fixing, vaswani2017attention} in centralized deep learning, its adaptation to federated settings remains largely unexplored. In this section, we analyze three fundamental challenges that hinder its effectiveness in FL settings.

\textbf{Challenge 1: High Variance in Second-Moment Estimates ($\boldsymbol{v}$).} AdamW maintains a second-moment estimate ($\boldsymbol{v}$) to scale gradients adaptively, updated as:
\begin{equation}
    \label{eq 5.1}
    \boldsymbol{v}^{r, k}_i = \beta_2 \boldsymbol{v}^{r, k-1}_i + (1 - \beta_2) \boldsymbol{g}^{r,k}_i \odot \boldsymbol{g}^{r,k}_i,
\end{equation}
where $\boldsymbol{v}^{r,k}_i$ denotes the second-moment estimate maintained by client $i$ at local step $k$ of round $r$, $\boldsymbol{g}^{r,k}_i$ is the stochastic gradient, $\beta_2=0.999$ is the exponential decay rate for the second moment, and $\odot$ represents the element-wise (Hadamard) product in \textbf{Algorithm \ref{algorithm_local_adamw}}.
In FL, data heterogeneity leads to gradient heterogeneity. The squared stochastic gradients $\boldsymbol{g}^{r,k}_i \odot \boldsymbol{g}^{r,k}_i$ in Eq.~\eqref{eq 5.1} amplify the variance of $\boldsymbol{v}$ across clients  in \textbf{Figure \ref{fig:challenge} (a)}. This can cause instability and inefficient aggregation, especially when using non-i.i.d. data~\cite{chen2020toward}. 

\textbf{Challenge 2: Local Overfitting and Client Drift.} While AdamW accelerates convergence through its adaptivity, it may exacerbate local overfitting. In FL, where each client minimizes its own local objective $f_i(\cdot)$, creating a natural gap between the local and global optima. Adaptive optimizers  such as  AdamW, with stronger update magnitudes, can drive clients further toward their local optima—diverging from the global direction. This leads to client drift as illustrated in \textbf{Figure \ref{fig:challenge} (b)}, which manifests as inconsistencies in local models that degrade the global  performance.

\textbf{Challenge 3: Reinitialization Overhead.} In  FL, AdamW optimizer states are reinitialized from zero each  round:
\begin{equation}
    \label{eq 5.2}
    \boldsymbol{m}^{r, 0}_i \gets \boldsymbol{0},\quad \boldsymbol{v}^{r, 0}_i \gets \boldsymbol{0}.
\end{equation}
Reinitializing moment estimates across rounds erases temporal memory, hindering the accumulation of adaptive statistics and slowing convergence, particularly in deep or large-scale models.

\begin{algorithm}[tb]
	\caption{Local AdamW Algorithm}
	\begin{algorithmic}[1]
		\STATE {\textbf{Initial} model $\boldsymbol{x}^0$, $\beta_1=0.9, \beta_2=0.999, \epsilon=10^{-8}$, time step $t \leftarrow 0$, the number of all clients $N$, each round selected clients $S$, weight decay $\lambda$}.
		\FOR{$r = 1, \dots, R$}
		\FOR{each selected client $i \in \{1, \dots, S\}$ in parallel}
		\STATE $\boldsymbol{x}_i^{r,0} \gets \boldsymbol{x}^r$, $\boldsymbol{m}^{r, 0}_i \gets \boldsymbol{ 0}$, $\boldsymbol{v}^{r, 0}_i \gets  \boldsymbol{ 0}$;
		\FOR{$k = 1, \dots, K$}
		\STATE $\boldsymbol{g}^{r,\tau}_i\gets\nabla f_i(\boldsymbol{x}_i^{r, k} ; \xi_i)$;
		\STATE $\boldsymbol{m}^{r, k}_i=\beta_1 \boldsymbol{m}^{r, k-1}_i+\left(1-\beta_1\right)\boldsymbol{g}^{r,k}_i$;
		\STATE$\boldsymbol{v}^{r, k}_i=\beta_2  \boldsymbol{v}^{r, k-1}_i+\left(1-\beta_2\right) \boldsymbol{g}^{r,k}_i\odot \boldsymbol{g}^{r,k}_i$;
		\STATE $\hat{\boldsymbol{m}}^{r, k}_i=\boldsymbol{m}^{r, k}_i /\left(1-\beta_1^{k}\right)$;
		\STATE$ \hat{\boldsymbol{v}}^{r, k}_i=\boldsymbol{v}^{r, k}_i /\left(1-\beta_2^{k}\right)$;
		\STATE $\boldsymbol{x}_i^{r,k+1}\!=\!\boldsymbol{x}_i^{r,k}-\eta ( \hat{\boldsymbol{m}}^{r, k}_i/ (\sqrt{\hat{\boldsymbol{v}}^{r, k}_i}+\epsilon)- \lambda  \boldsymbol{x}_i^{r,k})$;
		\ENDFOR
		\STATE Communicate $( \boldsymbol{x}^{r, K}_i-\boldsymbol{x}^{r, 0}_i) $ to Server;
		\ENDFOR
		\STATE $\boldsymbol{x}^{r+1} =\boldsymbol{x}^{r} +\frac{1}{S} \sum_{i=1}^S (\boldsymbol{x}^{r, K}_i-\boldsymbol{x}^{r, 0}_i)$;
		\STATE Communicate $(\boldsymbol{x}^{r+1}) $ to Clients.
		\ENDFOR
	\end{algorithmic}
    \label{algorithm_local_adamw}
\end{algorithm}

\begin{algorithm}[tb]
	\caption{\texttt{FedAdamW} Algorithm}
	\begin{algorithmic}[1]
		\STATE {\textbf{Initial} model $\boldsymbol{x}^0$, $\beta_1=0.9, \beta_2=0.999, \epsilon=10^{-8}$, time step $t \leftarrow 0$, the number of all clients $N$, each round selected clients $S$, weight decay $\lambda$}.
		\FOR{$r = 1, \dots, R$}
		\FOR{each selected client $i \in \{1, \dots, S\}$ in parallel}
		\STATE $\boldsymbol{x}_{i}^{r,0} \gets \boldsymbol{x}^r$, $\boldsymbol{m}^{r,0}_{i} \gets \boldsymbol{0}$, $\boldsymbol{v}^{r,0}_{i} \gets  \boldsymbol{\bar{v}}^{r}$;
		\FOR{$k = 1, \dots, K$}
		\STATE $t \gets  t+1$;
		\STATE $\boldsymbol{g}^{r,k}_i\gets\nabla f_i(\boldsymbol{x}_i^{r, k} ; \xi_i)$;
		\STATE $\boldsymbol{m}^{r, k}_i=\beta_1 \boldsymbol{m}^{r, k-1}_i+\left(1-\beta_1\right)\boldsymbol{g}^{r,k}_i$;
		\STATE$\boldsymbol{v}^{r, k}_i=\beta_2  \boldsymbol{v}^{r, k-1}_i+\left(1-\beta_2\right) \boldsymbol{g}^{r,k}_i\odot \boldsymbol{g}^{r,k}_i$;
		\STATE \textbf{Bias correction}
		\STATE $\hat{\boldsymbol{m}}^{r, k}_i=\boldsymbol{m}^{r, k}_i /\left(1-\beta_1^{k}\right)$;
		\STATE$ \hat{\boldsymbol{v}}^{r, k}_i=\boldsymbol{v}^{r, k}_i /\left(1-\beta_2^{t}\right)$;
		\STATE $\vartheta_{i}^{r,k}=1 / ( \sqrt{\hat{\boldsymbol{v}}_{i}^{r,k}}+\epsilon)$;

	 \STATE \textbf {Update model parameters }\\
    $\boldsymbol{x}_i^{r,k+1}=\boldsymbol{x}_i^{r,k}-\eta(\hat{\boldsymbol{m}}^{r, k}_i\odot \vartheta_{i}^{r,k}+\alpha \boldsymbol{\Delta}_G^r- \lambda\boldsymbol{x}_i^{r,k})$;
		\ENDFOR
		\STATE Communicate $( \boldsymbol{x}^{r, K}_i-\boldsymbol{x}^{r, 0}_i, \boldsymbol{\bar{v}}_i=\texttt{mean}\Big(\boldsymbol{v}^{r, K}_i\Big) )$ to Server;
		\ENDFOR
        \STATE $\boldsymbol{\Delta}_G^r=\frac{-1}{SK\eta} \sum_{i=1}^S (\boldsymbol{x}^{r, K}_i-\boldsymbol{x}^{r, 0}_i)$;
		\STATE $\boldsymbol{x}^{r+1} =\boldsymbol{x}^{r} +\frac{1}{S} \sum_{i=1}^S (\boldsymbol{x}^{r, K}_i-\boldsymbol{x}^{r, 0}_i)$;
		\STATE $\boldsymbol{\bar{v}}^{r+1} =\frac{1}{S} \sum_{i=1}^S \boldsymbol{\bar{v}}_i$;
		\STATE Communicate $(\boldsymbol{x}^{r+1}, \boldsymbol{\bar{v}}^{r+1},\boldsymbol{\Delta}_G^r ) $ to Clients.
		\ENDFOR
	\end{algorithmic}
	\label{algorithm_fedadamw}
\end{algorithm}

\begin{figure}[tb]
	\centering
	\begin{minipage}[t]{0.23\textwidth}
		\centering
		\subcaptionbox{High Variance in $\boldsymbol{v}$}{\includegraphics[width=\textwidth]{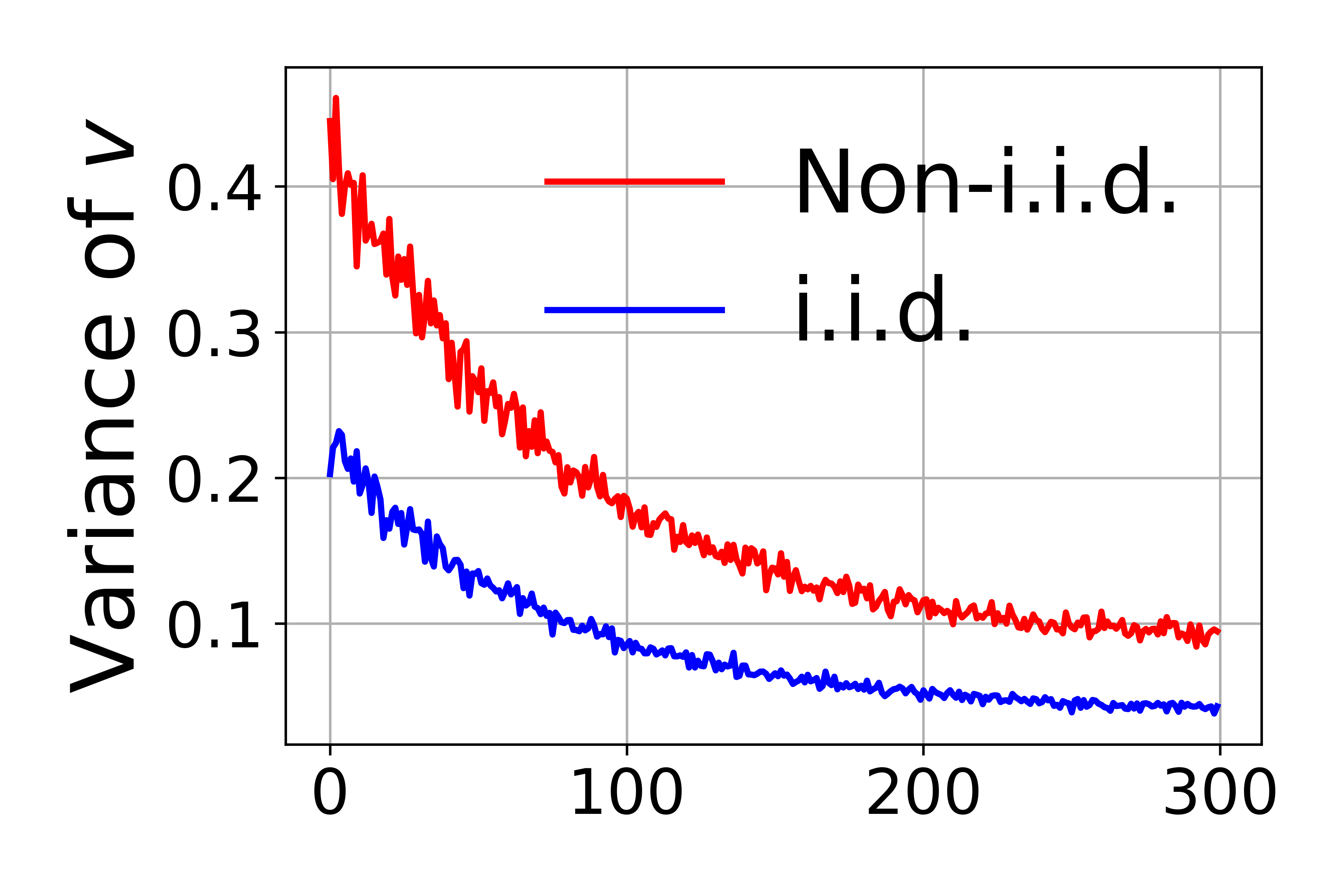}}
	\end{minipage}
	\begin{minipage}[t]{0.23\textwidth}
		\subcaptionbox{Local AdamW client drift}{\includegraphics[width=\textwidth]{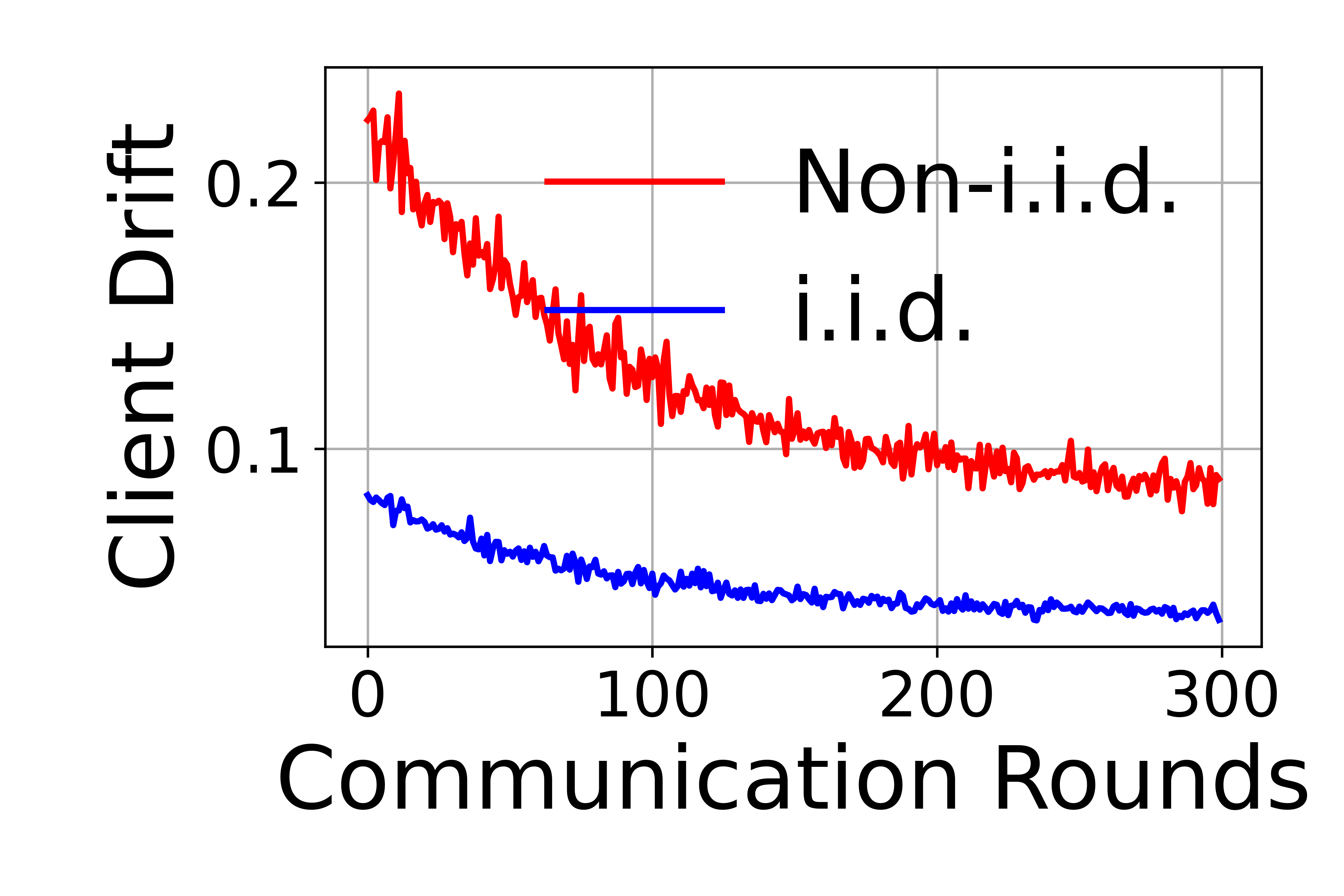}}
	\end{minipage}
  \caption{\small Training on CIFAR-100 using ViT-Tiny. (a) Data heterogeneity causes high variance in second-moment estimates across clients of Local AdamW. 
    (b) Local AdamW suffers from more severe client drift than Local SGD under non-i.i.d. data. }
		\label{fig:challenge}
\end{figure}  


\section{Our Algorithm: FedAdamW}

\begin{figure}[t]
    \centering    
    \begin{subfigure}[b]{0.153\textwidth}
        \includegraphics[width=\textwidth]{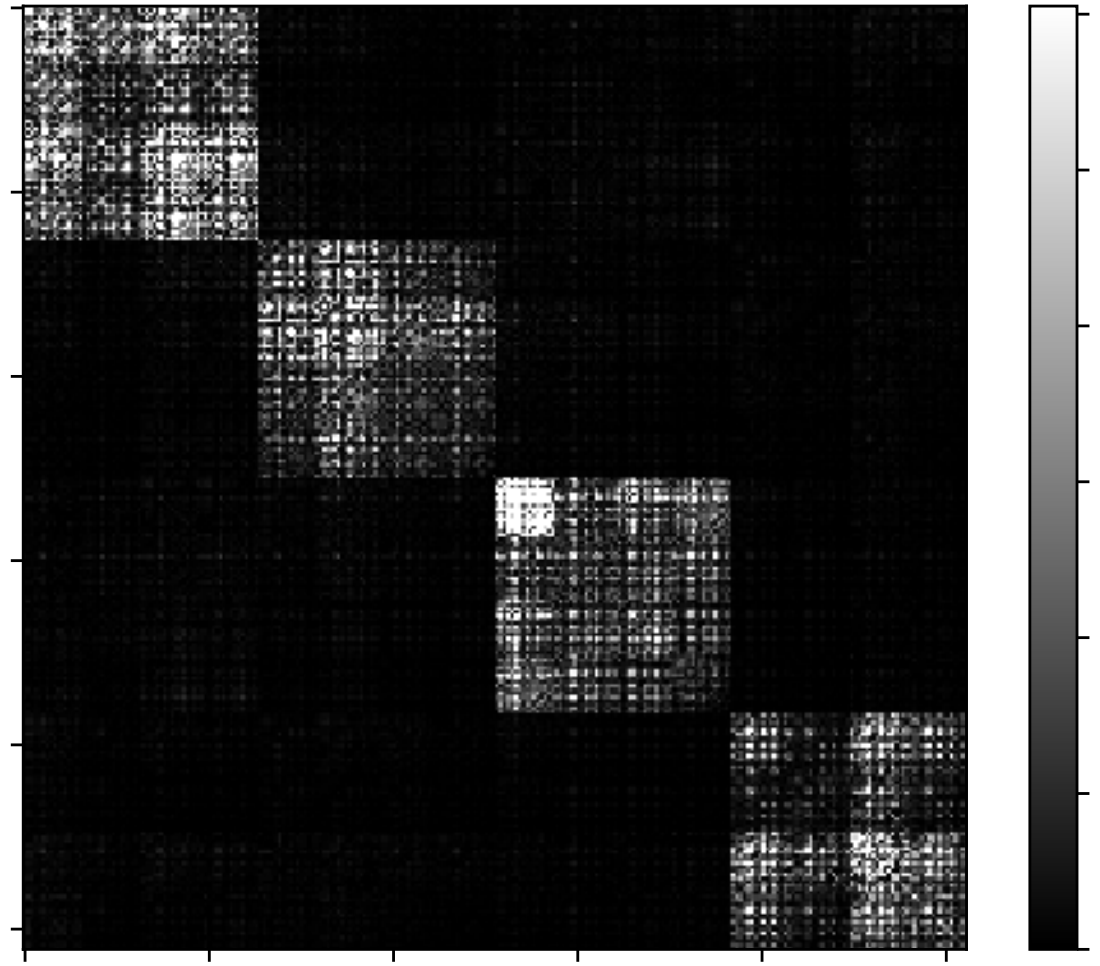}
        \caption{\texttt{query} (4 heads)}
    \end{subfigure}
    \begin{subfigure}[b]{0.153\textwidth}
        \includegraphics[width=\textwidth]{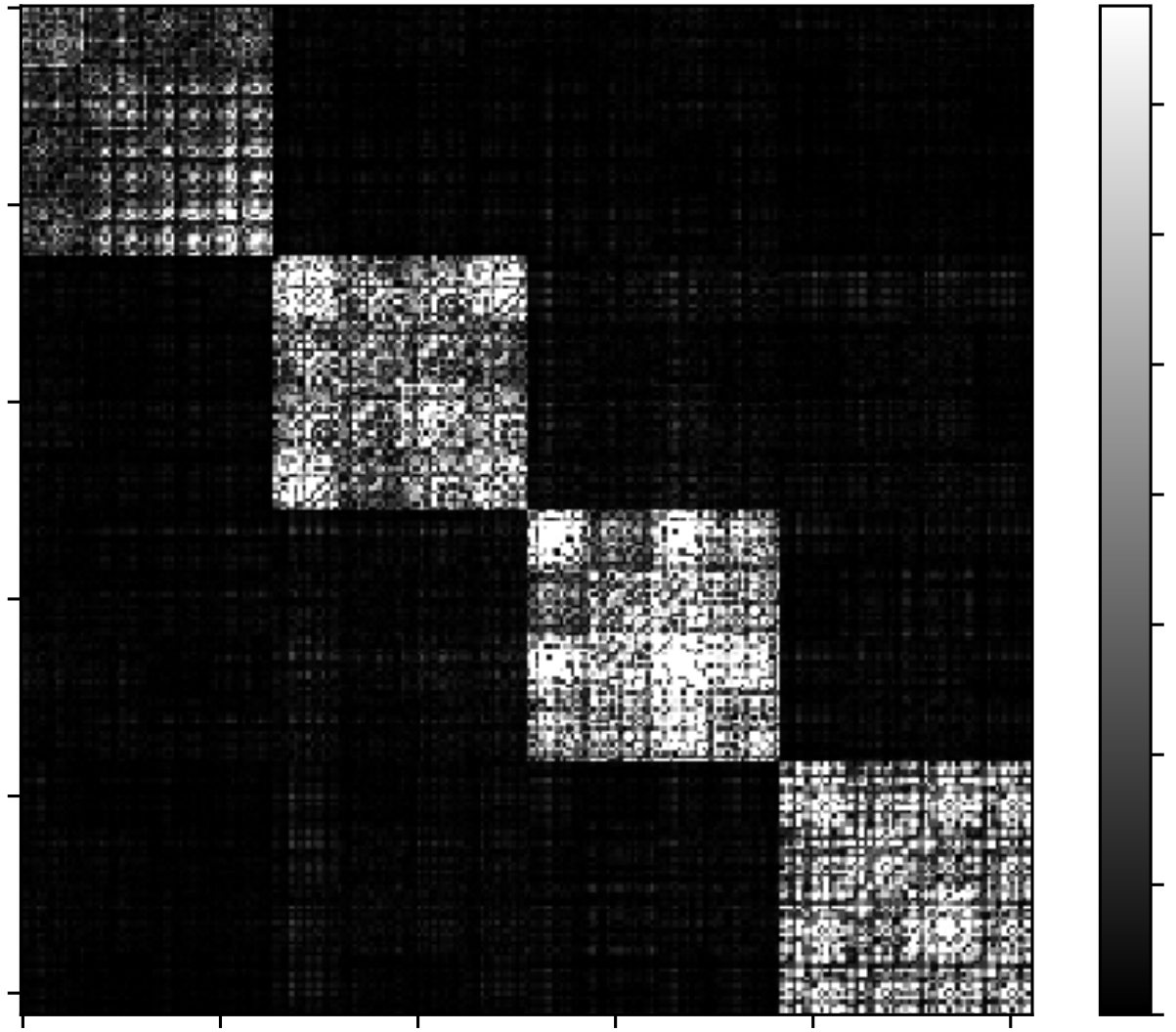}
        \caption{\texttt{key} (4 heads)}
    \end{subfigure}
    \begin{subfigure}[b]{0.153\textwidth}
        \includegraphics[width=\textwidth]{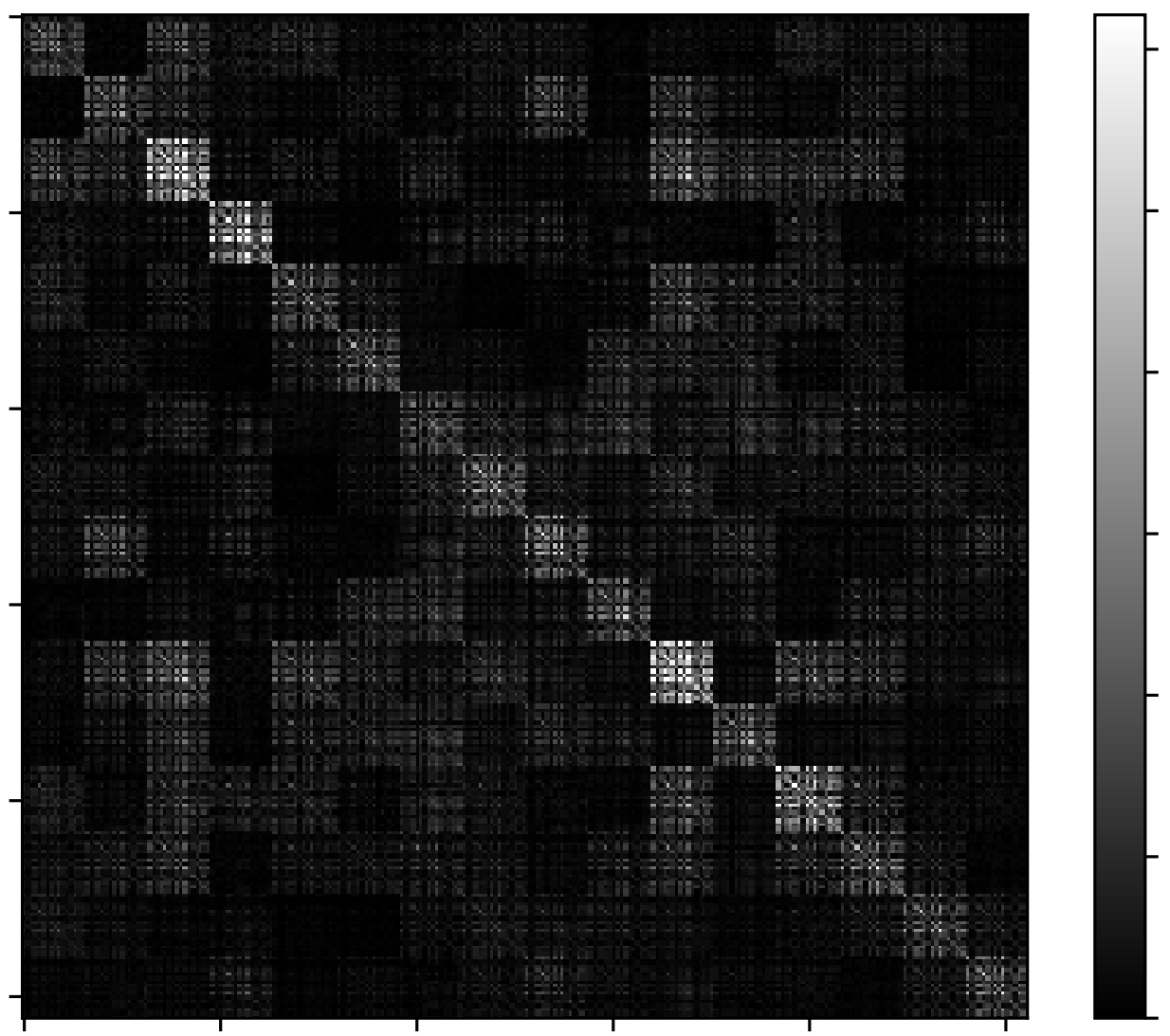}
        \caption{\texttt{value} (4 heads)}
    \end{subfigure}
    \caption{ (a–c):Block-wise Hessian structure of Transformer parameters under FL. Visualizing the Hessian submatrices of query, key, and value heads. The near block-diagonal structure supports block-wise second-moment aggregation in FedAdamW.}
    \label{babygpt_hessian_plot}
\end{figure}

\begin{figure}[tb]
 \includegraphics[width=0.45\textwidth]{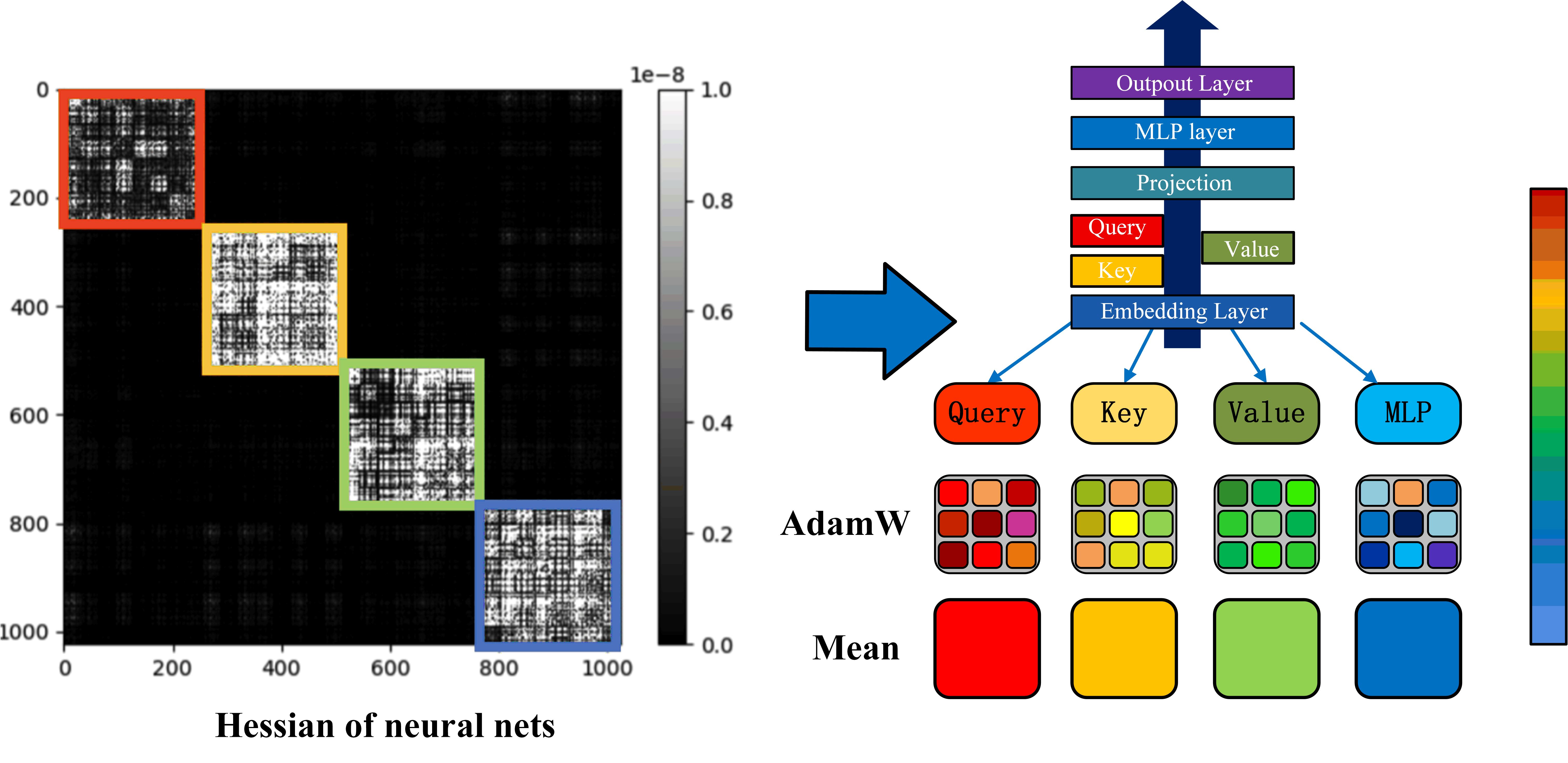}
  \caption{Illustration of FedAdamW’s block-wise aggregation strategy
Clients estimate local second-moment statistics and send block-wise means to the server, reducing communication cost. }
  \label{fig_square_plot}
\end{figure}

Based on theoretical motivation, we propose an efficient improvement to AdamW called Federated AdamW (\texttt{FedAdamW}). To address \textbf{Challenge 1}, it was experimentally discovered that aggregating AdamW second-moment estimate can stabilize the training process (see \textbf{Table} \ref{tab:avg} below). However, aggregating second-moment estimate leads to a double communication.




\subsection{\bf (Q1) \textbf{How to efficiently aggregate $\boldsymbol{v}$?}}
We observe that the Hessian matrix in neural networks exhibits an approximate block-diagonal structure with several dense sub-blocks~\cite{collobert2004large,zhang2024adam} as shown in \textbf{Figure \ref{babygpt_hessian_plot}}.  In such a structure, a single learning rate can effectively capture the curvature within each block. Leveraging this, we propose a communication-efficient strategy that partitions the second-moment estimate $\boldsymbol{v}$ into $B$ blocks and transmits only the mean of each in \textbf{Figure \ref{fig_square_plot}}:
\begin{equation}
\boldsymbol{\bar{v}}_b = \texttt{mean}(\boldsymbol{v}_b), \quad b = 1, \dots, B.
\end{equation}
\paragraph{Block-wise Partitioning Strategy (ViT Example).}
 We group the parameters into semantically aligned classes that exhibit similar curvature patterns as shown in Figure \ref{babygpt_hessian_plot}:
\begin{itemize}
    \item \textbf{Class 1: \texttt{query} and \texttt{key}.}  Query and Key parameters. Each block corresponds to one attention head. 
    
    \item \textbf{Class 2: \texttt{attn.proj} and MLPs.} Blocks align with output neurons in projection and feedforward layers.
    
    \item \textbf{Class 3: \texttt{value}.} Structure is less regular but still shows diagonal blocks; curvature magnitude is notably higher (up to $10^6\times$), possibly due to its position after softmax.
    \item \textbf{Class 4: \texttt{Embedding} and \texttt{output} layers.} Sub-blocks align with input tokens, forming near-diagonal Hessians.
\end{itemize}
\textbf{CNNs (e.g., ResNet):} Blocked by convolutional layers or residual blocks. This reduces the communication cost from billions of scalars to $B$ values while preserving adaptive behavior. Empirically, we find this approach also improves generalization in local optimization as shown in \textbf{Table} \ref{tab:avg} below. See \textbf{Appendix~D }for block partitioning details.


\subsection{ (Q2) How to overcome overfitting in Local AdamW?}

To address local overfitting (i.e., \textbf{Challenge 2}), we adopt a stronger weight decay. Unlike Adam, AdamW employs decoupled weight decay, which improves generalization, particularly in federated settings (see \textbf{Table~\ref{tab:lambda_ablation_vit}} below).
To further mitigate client drift under non-i.i.d. data, we incorporate a global update estimate into the local update rule:
\begin{equation}
\boldsymbol{x}_i^{r,k+1} = \boldsymbol{x}_i^{r,k} - \eta \left( \hat{\boldsymbol{m}}^{r,k}_i \odot \vartheta_i^{r,k} - \lambda \boldsymbol{x}_i^{r,k} + \alpha \boldsymbol{\Delta}_G^r \right),
\end{equation}
where $\boldsymbol{\Delta}_G^r\!=\!\frac{-1}{SK\eta} \sum_{i=1}^S (\boldsymbol{x}_i^{r,K} \!-\! \boldsymbol{x}_i^{r,0})$ is the estimated global update. As shown in \textbf{Figure~\ref{fig 4}}, this alignment reduces the divergence of local models and improves global consistency.


\subsection{{\bf (Q3)} 
    \textbf{How to initialize second-moment estimates?}}



We find that initializing the second-moment estimate $\boldsymbol{v}$ with its aggregated \texttt{mean} $\boldsymbol{v}$ significantly accelerates convergence (see \textbf{Table~\ref{tab:avg}} below). In contrast, we reinitialize the first-moment estimate $\boldsymbol{m}$ to zero at each round. This is because $\boldsymbol{m}$ adapts quickly to recent gradients and does not require long-term accumulation to remain effective.

\begin{figure}[tb]
 \includegraphics[width=0.47\textwidth]{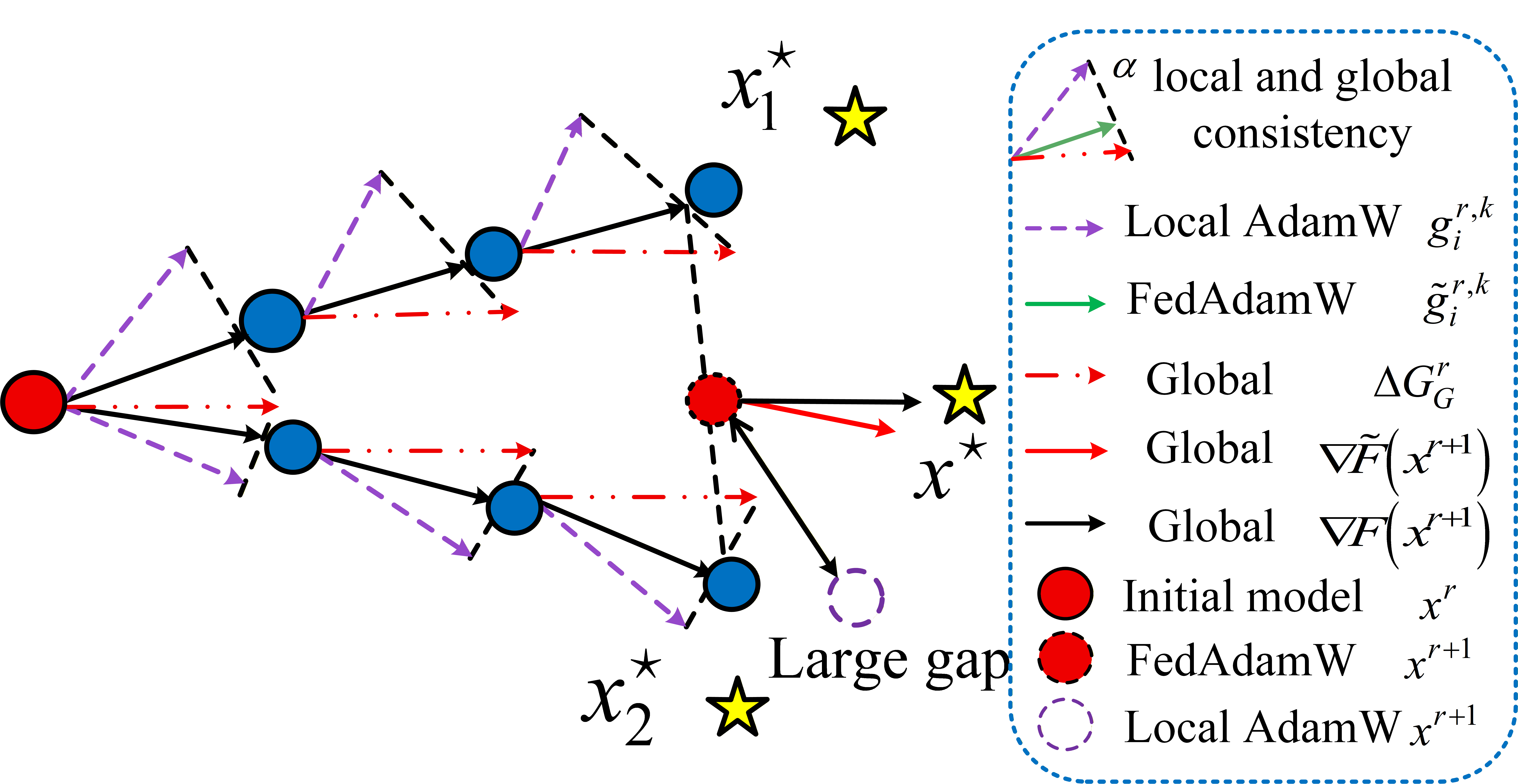}
  \caption{An illustration of local update in \texttt{FedAdamW}, which corrects client drift caused through global update guidance. }
  \label{fig 4}
\end{figure}

\section{Theoretical Analysis}

\subsection{Convergence Analysis}
\label{convergence_analysis}
In this part, we give the convergence theoretical analysis of our proposed \texttt{FedAdamW} algorithm. Firstly we state some standard assumptions for the non-convex function $f$.
\begin{assumption}[Smoothness]
	\label{smoothness}
	(Smoothness) \textit{The non-convex $f_{i}$ is a $L$-smooth function for all $i\in[m]$, i.e., $\Vert\nabla f_{i}(\boldsymbol{x})-\nabla f_{i}(\boldsymbol{y})\Vert\leq L\Vert\boldsymbol{x}-\boldsymbol{y}\Vert$, for all $\boldsymbol{x},\boldsymbol{y}\in\mathbb{R}^{d}$.}
\end{assumption}
\begin{assumption}[Bounded Stochastic Gradient]
	\label{bounded_stochastic_gradient_I}
    \textit{$\boldsymbol{g}_{i}^{r}=\nabla f_{i}(\boldsymbol{x}_{i}^{r}, \xi_i^{r})$ computed by using a sampled mini-batch data $\xi_i^{r}$ in the local client $i$ is an unbiased estimator of $\nabla f_{i}$ with bounded variance, i.e., $\mathbb{E}_{\xi_i^{r}}[\boldsymbol{g}_{i}^{r}]=\nabla f_{i}(\boldsymbol{x}_{i}^{r})$ and     $\mathbb{E}_{\xi_i^{r}}\Vert g_{i}^{r} - \nabla f_{i}(\boldsymbol{x}_{i}^{r})\Vert^{2} \leq \sigma_{l}^{2}$, for all $\boldsymbol{x}_{i}^{r}\in\mathbb{R}^{d}$.}
\end{assumption}
\begin{assumption}[Bounded Stochastic Gradient II]
	\label{bounded_stochastic_gradient_II}
	 \textit{Each element of stochastic gradient $\boldsymbol{g}_{i}^{r}$ is bounded, i.e., $\Vert\boldsymbol{g}_{i}^{r}\Vert_{\infty}=\Vert f_{i}(\boldsymbol{x}_{i}^{r},\xi_i^{r})\Vert_{\infty}\leq G_{g}$, for all $\boldsymbol{x}_{i}^{r}\in\mathbb{R}^{d}$ and any sampled mini-batch data $\xi_i^{r}$.}
\end{assumption}
\begin{assumption}[Bounded Heterogeneity]
	\label{bounded_heterogeneity}
	\textit{The dissimilarity between local clients is bounded on the gradients, i.e., $\Vert\nabla f_{i}(\boldsymbol{x})-\nabla f(\boldsymbol{x})\Vert^{2}\leq\sigma_{g}^{2}$, for all $\boldsymbol{x}\in\mathbb{R}^{d}$.}
\end{assumption}
These assumptions are standard in federated adaptive optimization literature \cite{fan2024locally,sun2023efficient}.
\begin{theorem}[Convergence for non-convex functions]\label{theorem_convergence_rate1}
	Under Assumptions \ref{smoothness}, \ref{bounded_stochastic_gradient_I}, and \ref{bounded_stochastic_gradient_II}, if we take $g^0=0$,$\beta_1=0,\lambda=0$
	then \texttt{FedAdamW} converges as follows
	\begin{equation}
	\frac{1}{R} \sum_{r=0}^{R-1} \mathbb{E}\left[\left\|\nabla f\left(\boldsymbol{x}^{r}\right)\right\|^2\right] \lesssim \mathcal{O}\left(\sqrt{\frac{L \Delta \sigma_l^2}{S K R \epsilon^2}}+\frac{L \Delta}{R}\right) .
	\end{equation}
	Here $G_0:=\frac{1}{N} \sum_{i=1}^N\left\|\nabla f_i\left(\boldsymbol{x}^0\right)\right\|^2$,$\Delta=f\left(\boldsymbol{x}^0\right)-f^{\star} $, $S$ is the number of participating clients per round, $K$ is the number of local iterations, and $R$ is the total number of communication rounds. 
\end{theorem}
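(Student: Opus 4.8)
The plan is to follow the standard descent-lemma route for federated adaptive methods: control the change in $f$ over one communication round using $L$-smoothness, then telescope over $r=0,\dots,R-1$ and tune the step size. Under the stated simplifications ($\beta_1=0$, $\lambda=0$, $g^0=0$) the bias-corrected first moment collapses to the raw stochastic gradient, $\hat{\boldsymbol{m}}_i^{r,k}=\boldsymbol{g}_i^{r,k}$, so the server increment becomes $\boldsymbol{x}^{r+1}-\boldsymbol{x}^r=-\frac{\eta}{S}\sum_{i=1}^S\sum_{k}\bigl(\boldsymbol{g}_i^{r,k}\odot\vartheta_i^{r,k}+\alpha\boldsymbol{\Delta}_G^r\bigr)$, i.e., preconditioned local SGD plus the round-level correction $\boldsymbol{\Delta}_G^r$. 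First I would record two elementary facts about the diagonal preconditioner: Assumption~\ref{bounded_stochastic_gradient_II} gives $\hat{\boldsymbol{v}}_i^{r,k}\le G_g^2$ coordinatewise, so $\tfrac{1}{G_g+\epsilon}\le\vartheta_{i,j}^{r,k}\le\tfrac{1}{\epsilon}$, meaning $\vartheta_i^{r,k}$ is a bounded positive-definite diagonal matrix throughout the run. This boundedness is what lets the adaptive scaling be treated as a controlled perturbation of identity-preconditioned SGD.

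Next I would insert the descent inequality, take conditional expectation over the sampling noise using Assumption~\ref{bounded_stochastic_gradient_I}, and split the inner product $\langle\nabla f(\boldsymbol{x}^r),\boldsymbol{x}^{r+1}-\boldsymbol{x}^r\rangle$ into (i) a negative \emph{aligned} term that, via the lower bound on $\vartheta$, dominates as $-\tfrac{\eta K}{G_g+\epsilon}\|\nabla f(\boldsymbol{x}^r)\|^2$ up to constants; (ii) a local-drift error arising from $\nabla f_i(\boldsymbol{x}_i^{r,k})-\nabla f_i(\boldsymbol{x}^r)$; and (iii) a preconditioner-mismatch error from replacing $\vartheta_i^{r,k}$ by a within-round reference value, controlled because $\beta_2$ is close to $1$ so $\vartheta$ varies slowly. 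The quadratic term $\tfrac{L}{2}\|\boldsymbol{x}^{r+1}-\boldsymbol{x}^r\|^2$ is bounded using the upper bound $\vartheta\le1/\epsilon$ and the unbiasedness/variance split, which is where the factor $\sigma_l^2/(S\epsilon^2)$ — and the linear speedup in $S$, from averaging $S$ independent client noises — enters.

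The crux is the client-drift lemma: bounding $\frac{1}{S}\sum_i\sum_k\mathbb{E}\|\boldsymbol{x}_i^{r,k}-\boldsymbol{x}^r\|^2$ \emph{without} invoking Assumption~\ref{bounded_heterogeneity}. I would unroll the local recursion and exploit that the correction term $\alpha\boldsymbol{\Delta}_G^r$ — which, by its definition $\boldsymbol{\Delta}_G^r=-\tfrac{1}{SK\eta}\sum_i(\boldsymbol{x}_i^{r,K}-\boldsymbol{x}_i^{r,0})$, is the previous round's averaged per-step direction and hence a surrogate for the global gradient — cancels the inter-client gradient dissimilarity that would otherwise force a $\sigma_g^2$ term. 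Concretely, the local bias $\nabla f_i(\boldsymbol{x}^r)-\nabla f(\boldsymbol{x}^r)$ is absorbed by this momentum-style correction, leaving a drift bound governed only by $\eta^2K$, the local variance $\sigma_l^2$, and $\|\nabla f(\boldsymbol{x}^r)\|^2$; choosing $\eta$ so that $L\eta K$ is small makes the drift contribution lower-order. This is the step I expect to be the main obstacle, both because the preconditioner couples nonlinearly with the gradient inside $\boldsymbol{\Delta}_G^r$ and because the cancellation of heterogeneity must be demonstrated rather than assumed.

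Finally I would telescope the one-round descent over $r=0,\dots,R-1$, divide by $R$, keep $\tfrac{\eta K}{G_g+\epsilon}\tfrac{1}{R}\sum_r\|\nabla f(\boldsymbol{x}^r)\|^2$ on the left, and collect on the right the boundary term $\Delta=f(\boldsymbol{x}^0)-f^\star$ together with the accumulated noise $\tfrac{\eta^2 LKR\sigma_l^2}{S\epsilon^2}$ and the (lower-order) drift residual. Dividing through leaves $\frac{1}{R}\sum_r\|\nabla f(\boldsymbol{x}^r)\|^2\lesssim\frac{\Delta}{\eta K R}+\frac{\eta L\sigma_l^2}{S\epsilon^2}$ up to the bounded constant $G_g+\epsilon$. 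Balancing the two terms with $\eta=\Theta\bigl(\sqrt{S\epsilon^2\Delta/(LKR\sigma_l^2)}\bigr)$ yields the dominant $\sqrt{L\Delta\sigma_l^2/(SKR\epsilon^2)}$ term, while capping $\eta$ at the stability threshold $\Theta(1/(LK))$ produces the residual $L\Delta/R$ term, completing the stated rate.
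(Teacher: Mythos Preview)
Your outline has the right high-level ingredients (descent lemma, bounded preconditioner, drift control, step-size balancing), but the step you correctly flag as ``the main obstacle'' is where the argument breaks. You propose to bound the single-round client drift $\frac{1}{S}\sum_{i,k}\mathbb{E}\|\boldsymbol{x}_i^{r,k}-\boldsymbol{x}^r\|^2$ by unrolling the local recursion and claiming that $\alpha\boldsymbol{\Delta}_G^r$ ``cancels the inter-client gradient dissimilarity.'' But $\boldsymbol{\Delta}_G^r$ is computed from round $r-1$, so it is a surrogate for the \emph{previous} round's preconditioned global direction, not for $\nabla f(\boldsymbol{x}^r)$. A purely within-round drift calculation cannot see this cancellation; when you expand $\nabla f_i(\boldsymbol{x}^r)-\nabla f(\boldsymbol{x}^r)$ against a stale $\boldsymbol{\Delta}_G^r$, the heterogeneity term survives unless you separately quantify how well $\boldsymbol{\Delta}_G^r$ tracks the current global gradient.

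The paper handles this by introducing a cross-round tracking error
\[
\mathcal{E}_r:=\mathbb{E}\Bigl\|\nabla f(\boldsymbol{x}^r)\odot\tfrac{1}{SK}\sum_{i,k}\vartheta_i^{r,k}-\boldsymbol{\Delta}_G^{r+1}\Bigr\|^2
\]
and proving a geometric recursion $\mathcal{E}_r\le(1-\tfrac{8\alpha}{9})\mathcal{E}_{r-1}+\tfrac{4\gamma^2L^2}{\alpha}\|\nabla f(\boldsymbol{x}^{r-1})\|^2+\tfrac{2\alpha^2\sigma_l^2}{SK\epsilon^2}+4\tfrac{\alpha}{\epsilon^2}L^2U_r$. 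This works because in the analyzed variant (Algorithm~3) the local step is the convex combination $\alpha\,\boldsymbol{g}_i^{r,k}\odot\vartheta_i^{r,k}+(1-\alpha)\boldsymbol{\Delta}_G^r$, so $\boldsymbol{\Delta}_G^{r+1}=(1-\alpha)\boldsymbol{\Delta}_G^r+\alpha\cdot(\text{average of preconditioned gradients})$ is an EMA, and the $(1-\alpha)$ contraction is what kills the heterogeneity without Assumption~\ref{bounded_heterogeneity}. The descent lemma then reads $\mathbb{E}f(\boldsymbol{x}^{r+1})\le\mathbb{E}f(\boldsymbol{x}^r)-\tfrac{11\gamma}{24}\|\nabla f(\boldsymbol{x}^r)\|^2+\tfrac{13\gamma}{24}\mathcal{E}_r$, and summing the $\mathcal{E}_r$ recursion (together with auxiliary bounds on the drift $U_r$ and the per-round initial step sizes $\Xi_r$) feeds into a combined Lyapunov. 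The final rate comes from tuning $\alpha=\min\{c,\sqrt{SKL\Delta\epsilon^2/(\sigma_l^2 R)}\}$ and $\gamma\asymp\alpha/L$, not from tuning the local $\eta$ as you do; your balancing over $\eta$ alone would not produce the $\sqrt{L\Delta\sigma_l^2/(SKR\epsilon^2)}$ term with the momentum-free constant.
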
	

The proof is provided in \textbf{Appendix A}. The convergence rate of \texttt{FedAdamW} is faster than that of Local AdamW and FedLADA's $\mathcal{O}\left(\sqrt{\frac{L \Delta (\sigma_l^2+\sigma_g^2)}{S K R \epsilon^2}}+\frac{L \Delta}{R}\right)$, and we do not need  \textbf{Assumption 4}. This is due to the suppression of local drift by the  global update estimation $\boldsymbol{\Delta}_G^r$. We have verified this in \textbf{Table \ref{tab:alpha_ablation}} below. 

\section{Generalization Analysis}

\begin{theorem}\label{theorem 2}
Assume the prior hypothesis $\boldsymbol{x}_0$ satisfies $\mathcal{P}_{\text {pre }} \sim \mathcal{N}(\mathbf{0}, \rho \boldsymbol{I})$. Then the expected risk for the posterior hypothesis $\boldsymbol{x} \sim \mathcal{P}$ of \texttt{FedAdamW} learned on training dataset $\mathcal{D}_{t r} \sim \mathcal{D}$ with $n$ samples holds 
$$
\begin{aligned}
    &\mathbb{E}_{\boldsymbol{\xi} \sim \mathcal{D}, \boldsymbol{x} \sim \mathcal{P}}[f(\boldsymbol{x}, \boldsymbol{\xi})]-\mathbb{E}_{\boldsymbol{\xi} \in \mathcal{D}_{t r}, \boldsymbol{x} \sim \mathcal{P}}[f(\boldsymbol{x}, \boldsymbol{\xi})]\leq\\ & \frac{\sqrt{8}}{\sqrt{n}}\left(\sum_{i=1}^d \log \frac{2 \rho b\left(\sigma_i^{\frac{1}{2}}+\lambda\right)}{\eta}\!+\!\frac{\eta}{2 \rho b} \sum_{i=1}^d \frac{1}{\sigma_i^{\frac{1}{2}}+\lambda}\!+\!c_0\!\right)^{\frac{1}{2}}
\end{aligned}
$$
, with at least probability $1-\tau$, where $\tau \in(0,1)$ and $c_0=\frac{1}{2 \rho}\left\|\boldsymbol{x}_*\right\|^2-\frac{d}{2}+2 \ln \left(\frac{2 n}{\tau}\right)$. Here, $\sigma_i$ represents the local curvature (e.g., Hessian eigenvalue).  $b$ is the batch size, $\eta$ is the learning rate, $\lambda$ is a weight decay parameter, $n$ is the training set size, and $d$ is the parameter dimension.

\end{theorem}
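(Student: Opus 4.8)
The plan is to cast the generalization gap as a PAC-Bayesian bound and then evaluate that bound in closed form by exploiting the Gaussian structure of both the prior and the FedAdamW posterior. First I would characterize the distribution $\mathcal{P}$ of the learned parameters. Taking the second-order model $F(\boldsymbol{x}) = F(\boldsymbol{x}_*) + \frac{1}{2}(\boldsymbol{x}-\boldsymbol{x}_*)^\top \boldsymbol{H}_*(\boldsymbol{x}-\boldsymbol{x}_*)$ near a minimizer $\boldsymbol{x}_*$, I would treat the FedAdamW iteration—its bias-corrected adaptive preconditioner $\vartheta^{r,k}\approx 1/(\sqrt{\boldsymbol{v}}+\epsilon)$ together with the decoupled weight-decay term $-\lambda\boldsymbol{x}$—as a discretized stochastic (Ornstein--Uhlenbeck-type) process driven by minibatch gradient noise whose covariance scales like $1/b$. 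Diagonalizing through the SVD $\boldsymbol{H}_* = \boldsymbol{U}\boldsymbol{S}\boldsymbol{U}^\top$ with $\boldsymbol{S}=\operatorname{diag}(\sigma_1,\dots,\sigma_d)$, the stationary distribution of this process is $\mathcal{P} = \mathcal{N}(\boldsymbol{x}_*, \boldsymbol{M})$ with $\boldsymbol{M} = \frac{\eta}{2b}\,\boldsymbol{U}(\boldsymbol{S}^{1/2}+\lambda\boldsymbol{I})^{-1}\boldsymbol{U}^\top$; the $\boldsymbol{S}^{1/2}$ dependence is the signature of AdamW's inverse-square-root adaptive scaling, and $\lambda$ enters additively precisely because weight decay is decoupled.

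Second, I would compute $\mathrm{KL}(\mathcal{P}\|\mathcal{P}_{\text{pre}})$ for the Gaussians $\mathcal{P}=\mathcal{N}(\boldsymbol{x}_*,\boldsymbol{M})$ and $\mathcal{P}_{\text{pre}}=\mathcal{N}(\mathbf{0},\rho\boldsymbol{I})$ via the closed-form KL between multivariate normals. Since $\boldsymbol{M}$ and $\rho\boldsymbol{I}$ are simultaneously diagonalizable in the basis $\boldsymbol{U}$, the log-determinant and trace terms decouple over $\{\sigma_i\}$: the log-determinant term produces $\sum_{i=1}^d \log\frac{2\rho b(\sigma_i^{1/2}+\lambda)}{\eta}$, the trace term produces $\frac{\eta}{2\rho b}\sum_{i=1}^d \frac{1}{\sigma_i^{1/2}+\lambda}$, the mean-shift term contributes $\frac{1}{\rho}\|\boldsymbol{x}_*\|^2$, and the dimension contributes $-d$. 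These are exactly the quantities that appear inside the square root of the target bound.

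Third, I would apply a McAllester-type PAC-Bayesian inequality: with probability at least $1-\tau$ over $\mathcal{D}_{tr}$, the gap between the population risk and the empirical risk averaged over $\boldsymbol{x}\sim\mathcal{P}$ is at most $\sqrt{\frac{8}{n}\bigl(\mathrm{KL}(\mathcal{P}\|\mathcal{P}_{\text{pre}}) + \ln\frac{2n}{\tau}\bigr)}$ (up to the bounded-loss normalization that supplies the constant $\sqrt{8}$ and the $\ln(2n/\tau)$ term). Substituting the closed-form KL from the previous step and absorbing the $\|\boldsymbol{x}_*\|^2$, $-d/2$, and $\ln(2n/\tau)$ pieces into the constant $c_0=\frac{1}{2\rho}\|\boldsymbol{x}_*\|^2-\frac{d}{2}+2\ln(\frac{2n}{\tau})$ yields exactly the stated inequality. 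The monotone behavior of the two $\lambda$-dependent sums then furnishes the intended interpretation that stronger decoupled weight decay sharpens the generalization guarantee.

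I expect the main obstacle to be the first step: rigorously justifying that the FedAdamW iterate concentrates on the Gaussian with covariance $\frac{\eta}{2b}(\boldsymbol{S}^{1/2}+\lambda\boldsymbol{I})^{-1}$ in the eigenbasis. The quadratic approximation of $F$, the continuous-time surrogate for the adaptive update, and the identification of the preconditioner $1/\sqrt{\boldsymbol{v}}$ with $\boldsymbol{S}^{-1/2}$ near the optimum each demand controlled approximations, and the $\boldsymbol{S}^{1/2}$ (rather than $\boldsymbol{S}$) scaling is the delicate point that distinguishes AdamW from plain gradient descent. Once this stationary covariance is granted, Steps 2 and 3 reduce to routine Gaussian algebra and a direct invocation of the PAC-Bayes bound.
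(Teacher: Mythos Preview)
Your proposal is correct and follows essentially the same route as the paper: it first derives the Gaussian posterior $\mathcal{P}=\mathcal{N}(\boldsymbol{x}_*,\boldsymbol{M})$ with $\boldsymbol{M}=\frac{\eta}{2b}\boldsymbol{U}(\boldsymbol{S}^{1/2}+\lambda\boldsymbol{I})^{-1}\boldsymbol{U}^\top$ via an Ornstein--Uhlenbeck SDE surrogate (the paper's Lemma~\ref{lemma 8}), then computes the closed-form Gaussian KL, and finally plugs into McAllester's PAC-Bayes inequality $4\sqrt{\tfrac{1}{n}(\mathrm{KL}+\ln\tfrac{2n}{\tau})}$. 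Your identification of the delicate point---that the $\boldsymbol{S}^{1/2}$ scaling and additive $\lambda$ arise from the adaptive preconditioner and decoupled weight decay, and that this step rests on several approximations---matches exactly how the paper handles it by invoking prior SDE-based analyses of adaptive optimizers.
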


The proof is provided in \textbf{Appendix B}. \textbf{Theorem~\ref{theorem 2}} shows that the generalization error of \texttt{FedAdamW} can be upper bounded by $\mathcal{O}(1/\sqrt{n})$, where $n$ is the number of tatal data, consistent with classical results from PAC theory, stability, and uniform convergence~\cite{shalev2014understanding}.
We further analyze the impact of the decoupled weight decay parameter $\lambda$ on this bound. As $\lambda$ increases, the first term $\sum_{i=1}^d \log 2 \rho b(\sigma_i^{1/2} + \lambda)\eta^{-1}$ increases, while the second term $\frac{\eta}{2 \rho b} \sum_{i=1}^d(\sigma_i^{1/2} + \lambda)^{-1}$ decreases. Although choosing the optimal $\lambda$ is challenging in practice, this trade-off suggests that tuning $\lambda$ appropriately can lead to a smaller generalization error, as shown in Table \textbf{Table \ref{tab:lambda_ablation_vit}} below.
This explains why \texttt{FedAdamW} often outperforms Local Adam (which corresponds to $\lambda = 0$).

\begin{figure*}[tb]
	\centering
	\begin{minipage}[t]{0.245\textwidth}
		\centering
		\subcaptionbox{ResNet18, Dir-0.6}{\includegraphics[width=\textwidth]{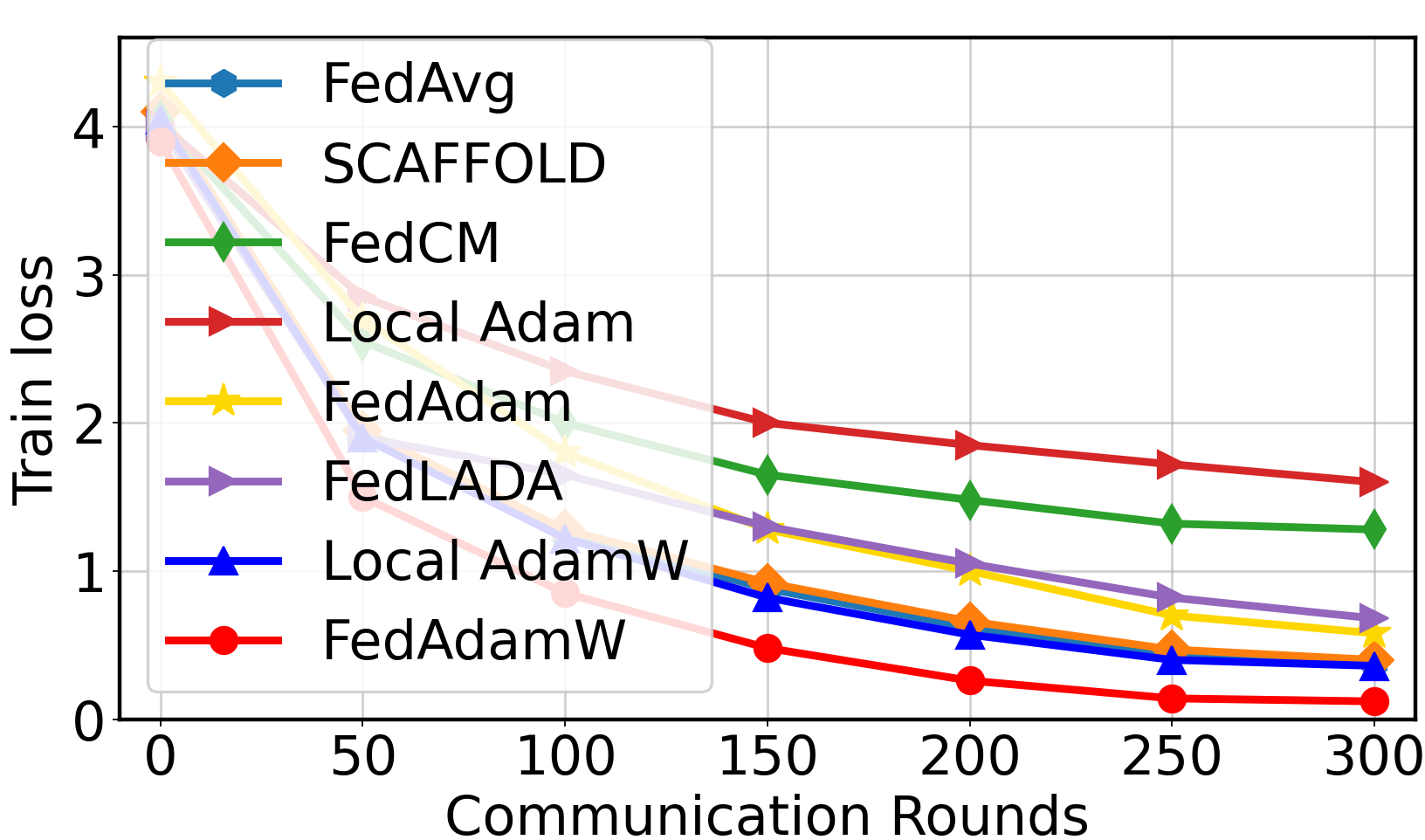}}
	\end{minipage}
	\begin{minipage}[t]{0.245\textwidth}
		\subcaptionbox{ResNet18, Dir-0.1}{\includegraphics[width=\textwidth]{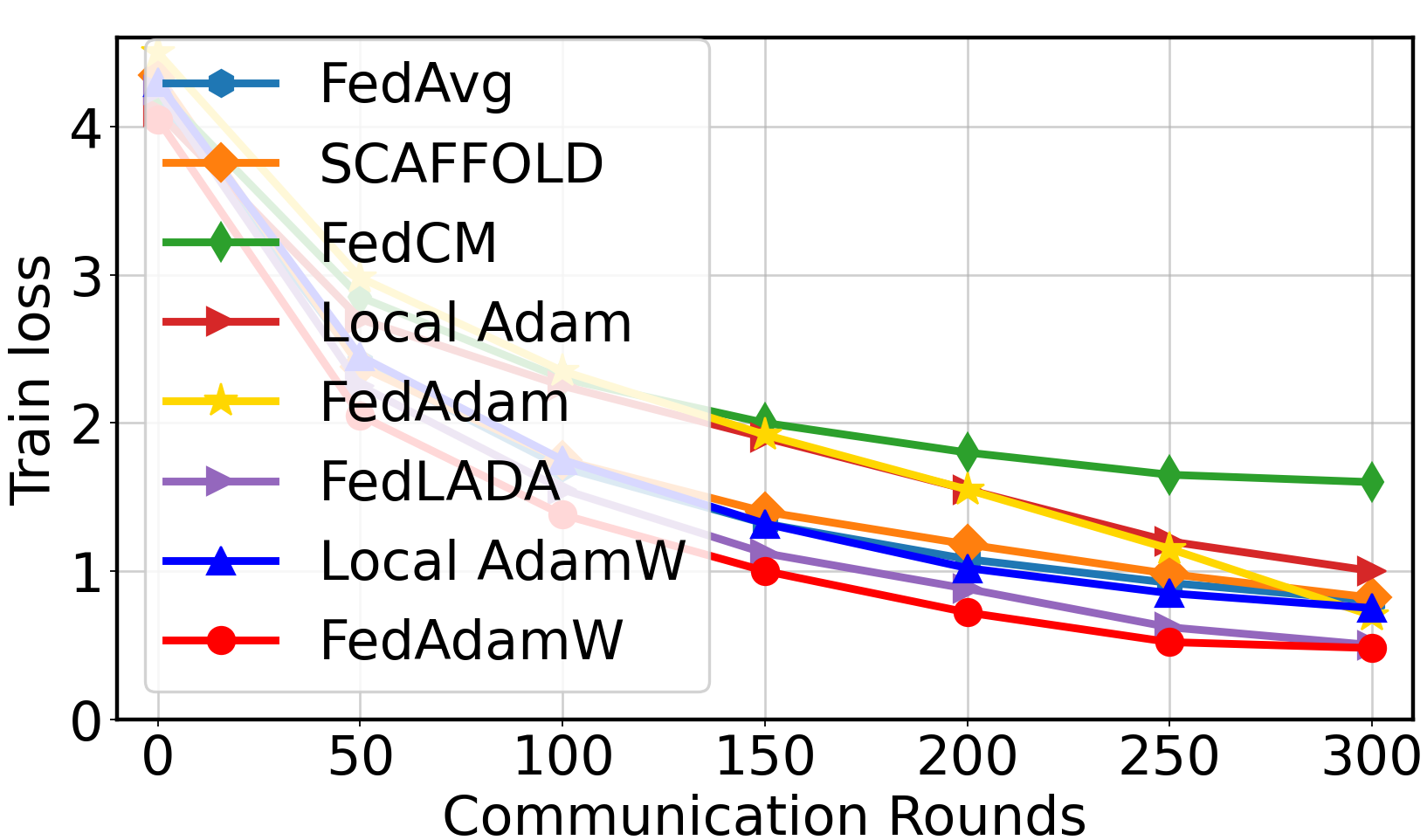}}
	\end{minipage}
    	\begin{minipage}[t]{0.245\textwidth}
		\subcaptionbox{ViT-Tiny, Dir-0.6}{\includegraphics[width=\textwidth]{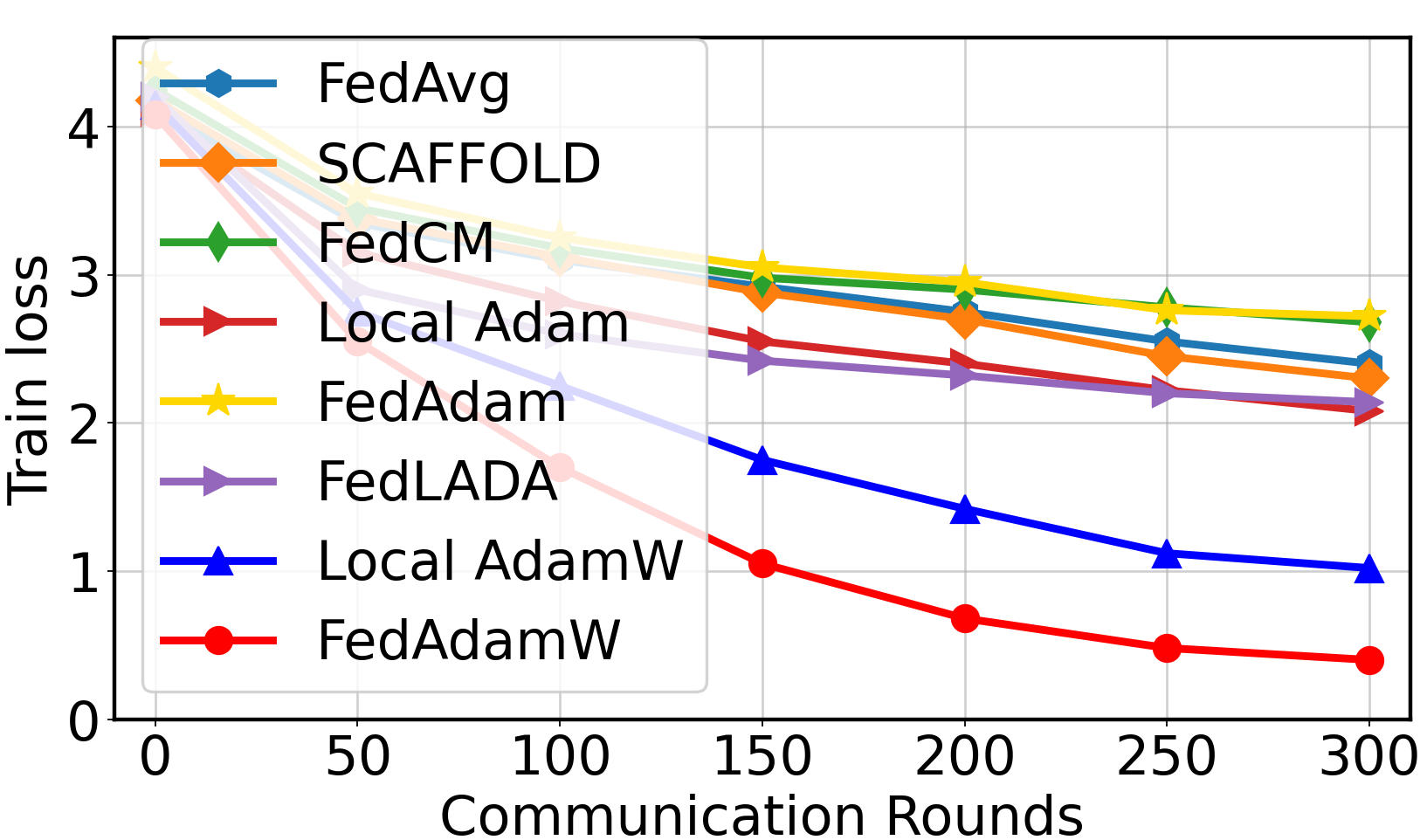}}
	\end{minipage}
    	\begin{minipage}[t]{0.245\textwidth}
		\subcaptionbox{ViT-Tiny, Dir-0.1}{\includegraphics[width=\textwidth]{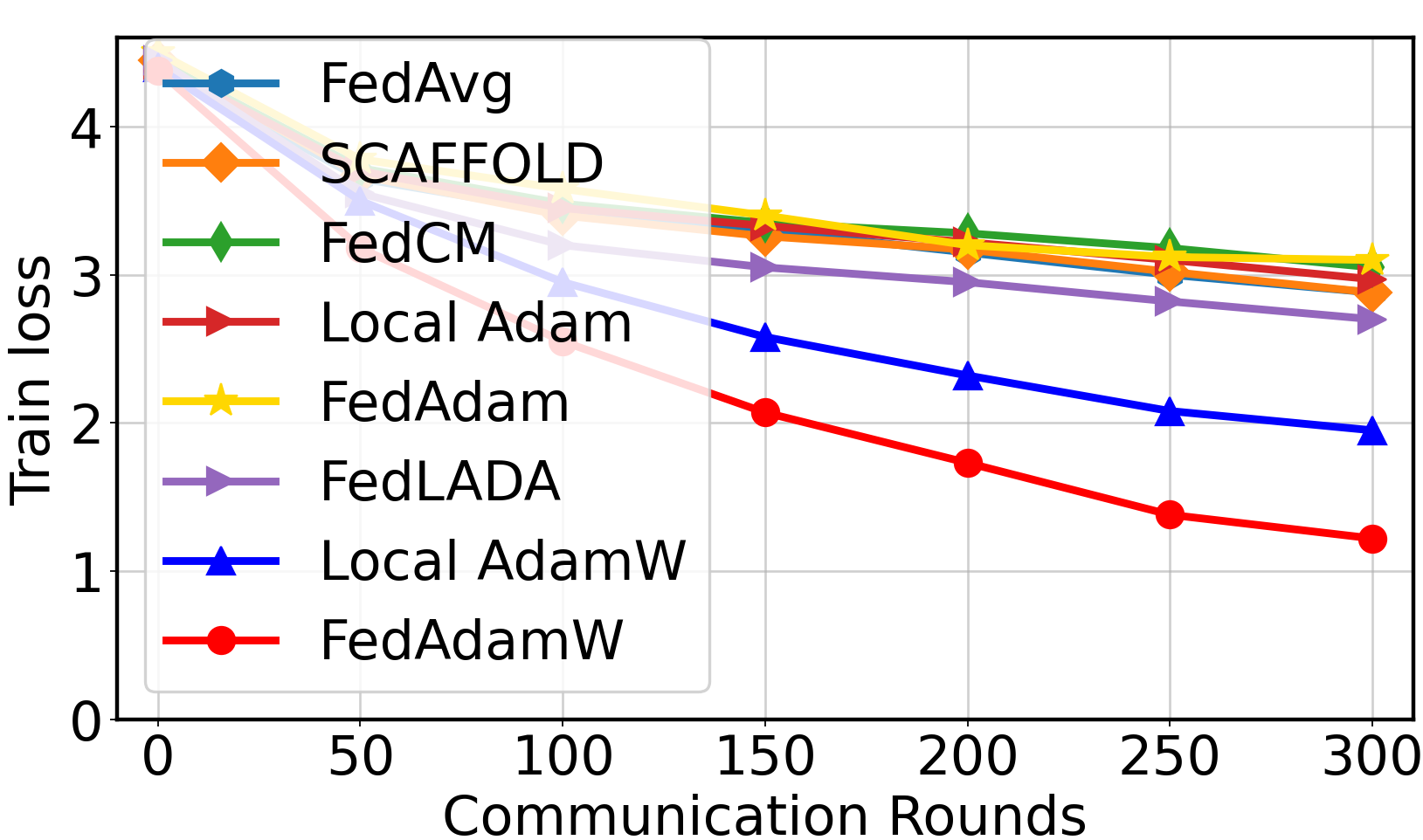}}
	\end{minipage}
  \caption{Training loss curves on CIFAR-100 using ResNet-18 and ViT-Tiny under Dir-0.1, Dir-0.6. 
  }
		\label{fig:resnet18}
\end{figure*}

\begin{table*}[tb]
  \centering
  \setlength{\tabcolsep}{3pt}
  \begin{tabular}{lccccccccc}
    \toprule
    \multirow{2}{*}{\textbf{Method}} 
    & \multicolumn{2}{c}{\textbf{ResNet-18 (Dir-0.6)}} 
    & \multicolumn{2}{c}{\textbf{ResNet-18 (Dir-0.1)}}
    & \multicolumn{2}{c}{\textbf{ViT-Tiny (Dir-0.6)}} 
    & \multicolumn{2}{c}{\textbf{ViT-Tiny (Dir-0.1)}} 
    & \multirow{2}{*}{\textbf{Comm}} \\
    \cmidrule(lr){2-3} \cmidrule(lr){4-5} 
    \cmidrule(lr){6-7} \cmidrule(lr){8-9}
    & Test Acc & Train Loss & Test Acc & Train Loss 
    & Test Acc & Train Loss & Test Acc & Train Loss & \\
    \midrule
    FedAvg         & $64.08_{\pm0.18}$ & 0.376 & $60.25_{\pm0.20}$ & 0.767 & $32.36_{\pm0.08}$ & 2.350 & $27.14_{\pm0.12}$ & 2.867 & $1\times$ \\
    SCAFFOLD       & $65.01_{\pm0.15}$ & 0.365 & $59.37_{\pm0.16}$ & 0.814 & $32.17_{\pm0.12}$ & 2.295 & $27.31_{\pm0.11}$ & 2.855 & $2\times$\\
    FedCM          & $48.69_{\pm0.10}$ & 1.305 & $44.43_{\pm0.08}$ & 1.645 & $26.33_{\pm0.06}$ & 2.681 & $23.18_{\pm0.15}$ & 3.038 & $1\times$ \\
    Local Adam     & $60.98_{\pm0.28}$ & 1.598 & $58.88_{\pm0.20}$ & 0.975 & $38.69_{\pm0.16}$ & 2.082 & $29.88_{\pm0.08}$ & 2.961 & $1\times$ \\
    FedAdam        & $63.77_{\pm0.13}$ & 0.562 & $61.62_{\pm0.16}$ & 0.707 & $28.77_{\pm0.12}$ & 2.709 & $23.49_{\pm0.15}$ & 3.084 & $1\times$ \\
    FedLADA        & $65.07_{\pm0.18}$ & 0.671 & $59.93_{\pm0.21}$ & 0.556 & $37.31_{\pm0.16}$ & 2.127 & $35.33_{\pm0.16}$ & 2.678 & $2\times$ \\
    Local AdamW    & $62.84_{\pm0.08}$ & 0.363 & $58.97_{\pm0.10}$ & 0.794 & $40.47_{\pm0.09}$ & 1.026 & $36.86_{\pm0.11}$ & 1.954 & $1\times$ \\
    \rowcolor{gray!10}
    \texttt{FedAdamW} 
                   & \textbf{66.12$_{\pm0.10}$} & \textbf{0.122} 
                   & \textbf{63.01$_{\pm0.12}$} & \textbf{0.480} 
                   & \textbf{42.56$_{\pm0.10}$} & \textbf{0.401} 
                   & \textbf{39.86$_{\pm0.16}$} & \textbf{1.251} & $1\times$ \\
    \bottomrule
  \end{tabular}
    \caption{ Test accuracy, training loss, and communication cost of each method on CIFAR-100 using \textbf{ResNet-18} and \textbf{ViT-Tiny} over 300 communication rounds under Dir-0.6 and Dir-0.1 settings (100 clients, 10\% participation, batch size 50, $K=50$).}
      \label{tab:combined_cifar100}
\end{table*}

\section{Experiments}

\textbf{Datasets.} We evaluate \texttt{FedAdamW} on both vision and language tasks. (\textit{i}) For image classification, we use CIFAR-100~\cite{krizhevsky2009learning}, and Tiny ImageNet~\cite{le2015tiny}. (\textit{ii}) For NLP tasks, we adopt benchmark datasets from the GLUE benchmark, including SST-2, QQP. To simulate data heterogeneity across clients, we follow the Dirichlet partitioning scheme~\cite{hsu2019measuring}, where a Dir-0.6 corresponds to a low heterogeneity and Dir-0.1 implies high heterogeneity.

\textbf{Model Architectures.} We explore a variety of model types: (\textit{i}) ResNet-18~\cite{he2016deep} as a representative convolutional neural network (CNN), (\textit{ii}) Swin Transformer~\cite{liu2021swin} and ViT-Tiny~\cite{dosovitskiy2020image} for Vision Transformers, and (\textit{iii}) RoBERTa-Base~\cite{liu2019roberta} for large-scale language model.

\textbf{Baselines.} We compare our method against  state-of-the-art FL algorithms: \texttt{FedAvg}~\cite{mcmahan2017communication}, \texttt{SCAFFOLD}~\cite{karimireddy2020scaffold}, \texttt{FedCM}~\cite{xu2021fedcm}, \texttt{FedAdam}~\cite{reddi2020adaptive}, \texttt{FedLADA}~\cite{sun2023efficient}, \texttt{Local Adam} and \texttt{Local AdamW}.

\textbf{Hyperparameter Settings.} For \texttt{FedAvg}, \texttt{SCAFFOLD}, \texttt{FedCM}, \texttt{FedAdam}, the $lr$ is selected from 
$\{10^{-2},\ 3 \times 10^{-2},\ 5 \times 10^{-2},\ 10^{-1},\ 3 \times 10^{-1}\}$, 
with a weight decay of $0.001$.
For \texttt{FedAdamW}, \texttt{FedLADA}, \texttt{Local Adam} and \texttt{Local AdamW}, the $lr$ is selected from 
$\{10^{-4},\ 3 \times 10^{-4},\ 5 \times 10^{-4},\ 8 \times 10^{-4},\ 10^{-3}\}$, 
with  weight decay  $0.01$ or $0.001$, $\beta_1 = 0.9$, $\beta_2 = 0.999$. We apply cosine learning rate decay, and set \texttt{FedAdamW} to \textbf{$\boldsymbol{\alpha}\!=\!0.5$}, weight decay $\boldsymbol{\lambda}\!=\!0.01$. Additional hyperparameter configurations are detailed in \textbf{Appendix~C}. We release all code, configuration files to ensure full reproducibility. All results are averaged over 5 runs with std reported.

\textbf{Questions.} Our experiments are designed to answer the following:
\textbf{Q1.} \textit{Does \texttt{Local AdamW} outperform \texttt{Local SGD} when training Transformer models?}  
\textbf{Q2.} \textit{Can \texttt{FedAdamW} effectively address the three challenges identified for AdamW in FL?}  
\textbf{Q3.} \textit{Is \texttt{FedAdamW} generally effective across both CNNs and Transformers?}  
\textbf{Q4.} \textit{Are individual components of \texttt{FedAdamW}—such as global update correction, decoupled weight decay, and block-wise $\boldsymbol{v}$ averaging—empirically beneficial?}  
\textbf{Q5.} \textit{Do our theoretical findings (Theorems 1 and 2) align with empirical results?}

\begin{table}[tb]
  \centering

  \setlength{\tabcolsep}{0.5pt}
  \begin{tabular}{lcccc}
    \toprule
    \multirow{2}{*}{\textbf{Method}} 
    & \multicolumn{2}{c}{\textbf{CIFAR-100}} 
    & \multicolumn{2}{c}{\textbf{Tiny ImageNet}} \\
    \cmidrule(lr){2-3} \cmidrule(lr){4-5}
    & Test Acc & Train Loss & Test Acc & Train Loss \\
    \midrule
    FedAvg         & $80.02_{\pm0.28}$  & 0.588 & $80.38_{\pm0.22}$  & 0.826 \\
    SCAFFOLD       & $81.30_{\pm0.18}$  & 0.514 & $82.41_{\pm0.18}$  & 0.650 \\
    FedCM          & $82.38_{\pm0.19}$  & 0.565 & $83.18_{\pm0.19}$  & 0.522 \\
    Local Adam     & $79.75_{\pm0.26}$  & 0.534 & $73.63_{\pm0.28}$  & 1.045 \\
    FedAdam        & $77.48_{\pm0.19}$  & 0.651 & $78.20_{\pm0.22}$  & 0.834 \\
    FedLADA        & $74.64_{\pm0.18}$  & 0.598 & $70.95_{\pm0.19}$  & 0.944 \\
    Local AdamW    & $83.35_{\pm0.10}$  & 0.381 & $80.26_{\pm0.12}$  & 0.686 \\
    \rowcolor{gray!10}
    \texttt{FedAdamW} 
                   & \textbf{85.85$_{\pm0.08}$} & \textbf{0.285} 
                   & \textbf{85.23$_{\pm0.10}$} & \textbf{0.446} \\
    \bottomrule
  \end{tabular}
    \caption{Comparison of test accuracy and training loss for \textbf{Swin Transformer} under Dir-0.1 with 100 communication rounds(100 clients, 5\% participation, batch size 16, $K\!=\!50$).}
      \label{tab:swin_results}
\end{table}

\begin{table*}[tb]
  \centering
  \setlength{\tabcolsep}{3pt}
  \begin{tabular}{lcccccccc}
    \toprule
    \textbf{Method (Dir-0.8)} 
    & \textbf{CoLA} & \textbf{RTE} & \textbf{SST-2} & \textbf{QQP}
    & \textbf{MRPC} & \textbf{QNLI} & \textbf{MNLI} & \textbf{Avg Acc.} \\
    \midrule
    FedAvg           
    & 56.12$_{\pm0.18}$ & 48.72$_{\pm0.25}$ & 93.66$_{\pm0.10}$ 
    & 85.87$_{\pm0.14}$ & 86.00$_{\pm0.12}$ & 90.21$_{\pm0.09}$ 
    & 83.22$_{\pm0.17}$ & 77.68$_{\pm0.17}$ \\

    SCAFFOLD         
    & 57.79$_{\pm0.21}$ & 51.62$_{\pm0.28}$ & 93.15$_{\pm0.11}$ 
    & 84.25$_{\pm0.15}$ & 86.11$_{\pm0.13}$ & 90.32$_{\pm0.10}$ 
    & 83.49$_{\pm0.18}$ & 77.82$_{\pm0.17}$ \\

    FedCM            
    & 56.29$_{\pm0.16}$ & 64.98$_{\pm0.22}$ & 93.25$_{\pm0.12}$ 
    & 83.19$_{\pm0.17}$ & 85.56$_{\pm0.13}$ & 88.13$_{\pm0.15}$ 
    & 78.90$_{\pm0.19}$ & 78.33$_{\pm0.18}$ \\

    Local Adam       
    & 56.08$_{\pm0.19}$ & 62.81$_{\pm0.23}$ & 93.80$_{\pm0.09}$ 
    & 85.07$_{\pm0.13}$ & 84.55$_{\pm0.14}$ & 88.57$_{\pm0.12}$ 
    & 82.62$_{\pm0.16}$ & 79.07$_{\pm0.15}$ \\

    FedAdam          
    & 55.26$_{\pm0.20}$ & 58.12$_{\pm0.26}$ & 93.26$_{\pm0.10}$ 
    & 85.12$_{\pm0.13}$ & 86.11$_{\pm0.11}$ & 89.21$_{\pm0.09}$ 
    & 83.16$_{\pm0.17}$ & 78.32$_{\pm0.16}$ \\

    FedLADA          
    & 50.00$_{\pm0.24}$ & 57.40$_{\pm0.25}$ & 93.57$_{\pm0.11}$ 
    & 85.88$_{\pm0.14}$ & 82.59$_{\pm0.15}$ & 89.76$_{\pm0.10}$ 
    & 82.99$_{\pm0.16}$ & 77.17$_{\pm0.17}$ \\

    Local AdamW      
    & 56.45$_{\pm0.17}$ & 54.15$_{\pm0.27}$ & 93.57$_{\pm0.09}$ 
    & 86.93$_{\pm0.12}$ & 86.27$_{\pm0.11}$ & 90.73$_{\pm0.08}$ 
    & 84.26$_{\pm0.14}$ & 78.91$_{\pm0.15}$ \\

    \rowcolor{gray!10}
    \texttt{FedAdamW (ours)} 
    & \textbf{58.21$_{\pm0.15}$} & \textbf{66.48$_{\pm0.20}$} & \textbf{94.03$_{\pm0.08}$} 
    & \textbf{87.62$_{\pm0.11}$} & \textbf{86.76$_{\pm0.10}$} & \textbf{90.88$_{\pm0.07}$} 
    & \textbf{84.55$_{\pm0.13}$} & \textbf{81.79$_{\pm0.14}$} \\
    \bottomrule
  \end{tabular}
    \caption{ Test accuracy (\%) using RoBERTa-Base with LoRA across seven GLUE tasks over 100 communication rounds. }
      \label{tab:roberta_base_glue}
\end{table*}

\subsection{Results on Convolutional Neural Networks}


\paragraph{Training on CIFAR-100 with ResNet-18.}
\textbf{Table~\ref{tab:combined_cifar100}} and \textbf{Figure~\ref{fig:resnet18}} present the test accuracy and training loss on CIFAR-100 using ResNet-18. \texttt{FedAdamW} achieves the best performance under both Dir-0.6 and Dir-0.1 settings, reaching a top accuracy of \textbf{66.12\%} and \textbf{63.01\%}, respectively. It also attains the lowest training loss (\textbf{0.122} and \textbf{0.480}), demonstrating faster and more stable convergence. Compared to other adaptive baselines such as \texttt{FedAdam}, \texttt{FedAdamW} shows superior generalization under data heterogeneity, confirming its effectiveness in CNNs \textbf{(Q3)}.

\subsection{Results on  Transformer Models}
\textbf{Training on CIFAR-100 with ViT-Tiny.}
\textbf{Table~\ref{tab:combined_cifar100}} and \textbf{Figure~\ref{fig:resnet18}} show
\texttt{FedAdamW} achieves the best performance across both heterogeneity levels, with test accuracies of \textbf{42.56\%} (Dir-0.6) and \textbf{38.25\%} (Dir-0.1), and the lowest training loss (\textbf{0.401} and \textbf{1.251}), confirming its efficient convergence \textbf{(Q5)}. Compared to \texttt{Local AdamW}, it provides consistent improvements in both accuracy and stability \textbf{(Q1,Q2, Q3)}. Moreover, other adaptive baselines such as \texttt{FedAdam} and \texttt{FedLADA} perform significantly worse under high heterogeneity, highlighting the effectiveness of global update correction and decoupled weight decay \textbf{(Q4)}.
These results validate that \texttt{FedAdamW} is particularly effective for federated vision Transformers under non-i.i.d. conditions.
The small dataset CIFAR100 is difficult to support the performance of ViT, resulting in lower accuracy. Therefore, we continued to test on the pretrained model.\\
\textbf{Fine-tuning Results on Swin Transformer.}  
\textbf{Table~\ref{tab:swin_results}} reports results on Swin Transformer under Dir-0.1. \texttt{FedAdamW} achieves the highest test accuracy on both CIFAR-100 (\textbf{85.85\%}) and Tiny ImageNet (\textbf{85.23\%}), while also attaining the lowest training loss, reflecting faster convergence and improved generalization. Compared to other adaptive baselines such as \texttt{FedAdam} and \texttt{FedLADA}, \texttt{FedAdamW} consistently outperforms across both datasets, demonstrating its effectiveness in fine-tuning large Transformer models under non-i.i.d. conditions.\\
\textbf{Fine-tuning Results on LLMs.} 
\textbf{Table~\ref{tab:roberta_base_glue}} summarizes results on the GLUE benchmark using RoBERTa-Base with LoRA, 20 clients, 20\% participation, batch size 32, $K=50$, rank$=$16. \texttt{FedAdamW} achieves the highest average accuracy of \textbf{81.79\%}, outperforming strong baselines such as \texttt{FedAvg} (77.68\%) and \texttt{Local AdamW} (78.91\%). It is particularly strong on challenging tasks like \textbf{RTE} and \textbf{QQP}, exceeding the next best methods by \textbf{+1.50\%} and \textbf{+1.74\%}, respectively.


\subsection{Ablation Study}

\textbf{Impact of A1, A2, A3.} \textbf{Table~\ref{tab:ablation}} summarizes the effect of removing key components in \texttt{FedAdamW}. We draw the following observations: $\bullet$ Removing second-moment aggregation (\textbf{A1}) significantly degrades performance, indicating that \texttt{mean($\boldsymbol{v}$)} aggregation stabilizes adaptive updates across clients.
$\bullet$ Without global gradient alignment (\textbf{A2}), the local models drift apart, resulting in higher train loss and lower generalization.
$\bullet$ The use of standard (non-decoupled) weight decay (\textbf{A3}) leads to suboptimal regularization, confirming the necessity of decoupled weight decay for Transformer-based federated training.
$\bullet$ The complete \texttt{FedAdamW} (\textbf{A4}) consistently outperforms all ablated versions, validating the effectiveness of our joint design.


\textbf{Impact of $\alpha$.}  
\textbf{Table~\ref{tab:alpha_ablation}} evaluates the effect of the global update alignment parameter $\alpha$ in \texttt{FedAdamW}. As predicted by our convergence analysis (\textbf{Theorem~1}), incorporating global update direction helps suppress client drift and accelerates convergence. We observe that $\alpha=0.5$ yields the best performance, striking a balance between local adaptivity and global consistency, in line with our theoretical insight (\textbf{Q5}).

\begin{table}[tb]
  \centering
  \setlength{\tabcolsep}{1pt}
  \begin{tabular}{lcc}
    \toprule
    \textbf{Variant} & \textbf{Test Acc (\%)} & \textbf{Train Loss} \\
    \midrule
    A1: w/o $\boldsymbol{\bar{v}}$ (no moment agg.)      & $37.51_{\pm0.12}$ & 1.504 \\
    A2: w/o $\boldsymbol{\Delta}_G$ (no global align.)   & $37.42_{\pm0.14}$ & 1.621 \\
    A3: w/o decoupled weight decay                       & $38.25_{\pm0.16}$ & 1.356 \\
    \rowcolor{gray!10}
    A4: \texttt{FedAdamW} (Full)                         & $\mathbf{39.86}_{\pm0.16}$ & $\mathbf{1.251}$ \\
    \bottomrule
  \end{tabular}
    \caption{ Ablation study of  on CIFAR-100 using ViT-Tiny (Dir-0.1, 300 rounds).}
      \label{tab:ablation}
\end{table}

\begin{table}[tb]
  \centering
  \setlength{\tabcolsep}{6pt}
  \begin{tabular}{l|ccccc}
    \toprule
    \rowcolor{gray!10}
    $\alpha$ (Dir-0.1) & 0.00 & 0.25 & \textbf{0.50} & 0.75 & 1.00 \\
    \midrule
    Test Acc (\%) & 36.86 & 37.93 & \textbf{39.86} & 37.47 & 36.25 \\
    Train Loss & 1.954 & 1.586 & \textbf{1.251} & 1.362 & 1.491 \\
    \bottomrule
  \end{tabular}
    \caption{ Impact of $\alpha$ using ViT-Tiny on CIFAR-100.}
      \label{tab:alpha_ablation}
\end{table}


\textbf{Impact of weight decay $\lambda$.} \textbf{Table~\ref{tab:lambda_ablation_vit}} shows that decoupled weight decay, as used in AdamW and \texttt{FedAdamW}, consistently improves test accuracy over standard Adam. \texttt{FedAdamW} generalizes well across all $\lambda$ values, with $\lambda=0.01$ performing best. This aligns with our PAC-Bayesian analysis (\textbf{Theorem~2}), where an appropriate $\lambda$ balances regularization and curvature for better generalization (\textbf{Q5}).


\begin{table}[tb]
  \centering
  \setlength{\tabcolsep}{5pt}
  \renewcommand{\arraystretch}{1.1}
  \begin{tabular}{l|ccccc}
    \toprule
    \rowcolor{gray!10}
    \textbf{$\lambda$} & \textbf{0.0005} & \textbf{0.001} & \textbf{0.005} & \textbf{0.010} & \textbf{0.020} \\
    \midrule
    Local Adam     & 28.86 & 29.88 & 18.65 & 8.56 & 4.05 \\
    Local AdamW    & 35.82 & 36.12 & 36.54 & 36.86 & 36.28 \\
    \rowcolor{gray!10}
    \texttt{FedAdamW}       & \textbf{38.26} & \textbf{39.24} & \textbf{39.55} & \textbf{39.86} & \textbf{38.56} \\
    \bottomrule
  \end{tabular}
    \caption{ \textbf{Ablation on weight decay $\lambda$} using ViT-Tiny on CIFAR-100 (Dir-0.1). FedAdamW consistently outperforms local baselines across a range of $\lambda$ values.}
      \label{tab:lambda_ablation_vit}
\end{table}



\textbf{Impact of Aggregation Strategy.} \textbf{Table \ref{tab:avg}} shows that our strategy, \texttt{Agg-mean-$\boldsymbol{v}$}, achieves the best balance between accuracy and communication cost. While \texttt{Agg-$\boldsymbol{v}$} improves performance by reducing variance, full aggregation (\texttt{Agg-$\boldsymbol{vm}$}) introduces excessive communication with marginal gains. In contrast, \texttt{Agg-mean-$\boldsymbol{v}$} attains similar benefits with only $\mathcal{O}(B)$ communication, where $B$ is the number of blocks, demonstrating its scalability and effectiveness in stabilizing updates.

\begin{table}[tb]
  \centering
  \setlength{\tabcolsep}{1pt}
  \begin{tabular}{lccc}
    \toprule
    \textbf{Aggregation Strategy} & \textbf{Acc} & \textbf{Train Loss} & \textbf{Comm(↑)} \\
    \midrule
    NoAgg      & 36.86$_{\pm0.11}$ & 1.954 & 5.7M \\
    Agg-$\boldsymbol{m}$                   & 37.12$_{\pm0.13}$ & 1.854 & 11.4M \\
    Agg-$\boldsymbol{v}$     & 38.01$_{\pm0.12}$ & 1.652 & 11.4M \\          
    Agg-$\boldsymbol{vm}$ (FullAgg)        & 38.12$_{\pm0.12}$ & 1.645 & 17.1M \\
    \rowcolor{gray!10}
    Agg-\texttt{mean}-$\boldsymbol{v}$    & \textbf{38.15$_{\pm0.10}$} & \textbf{1.601} & 5.7M \\     
    \bottomrule
  \end{tabular}
    \caption{ Ablation study of moment aggregation strategies of Local AdamW on CIFAR-100 with \textbf{ViT-Tiny} under Dir-0.1.}
  \label{tab:avg}
\end{table}

\section{Conclusion}

In this work, we proposed a novel federated optimization algorithm (\texttt{FedAdamW}) for training large-scale Transformer models. \texttt{FedAdamW} tackles the key challenges of applying AdamW in federated settings, including high variance in second-moment estimates, local overfitting under non-i.i.d. data, and inefficiencies from frequent reinitialization. It integrates second-moment aggregation, global update correction, and decoupled weight decay. We provided convergence analysis under non-convex and use the PAC Bayesian theory to support its generalization benefits. Extensive experiments on vision and language tasks verified that \texttt{FedAdamW} consistently outperforms strong FL baselines, especially on Transformer architectures, demonstrating its practical and theoretical strengths. We believe \texttt{FedAdamW}  opens a new direction for adapting modern optimizers to FL such as LAMB \cite{chen2023symbolic} or Lion \cite{chen2023symbolic}.

\section*{Acknowledgments}
This work was supported by the National Natural Science Foundation of China (Nos.\
62276182, 62276004), Peng Cheng Lab Program (No. PCL2023A08), Tianjin Natural Science Foundation (Nos.\ 24JCYBJC01230, 24JCYBJC01460), Tianjin Municipal Education Commission Research Plan (No.\
2024ZX008).



\bibliography{main}




\end{document}